\newcommand*{\Scale}[2][4]{\scalebox{#1}{$#2$}}%
\newcommand{\myparagraph}[1]{\textit{\textbf{#1}}}
\patchcmd{\maketitle}{\@copyrightpermission}{
   \begin{minipage}{0.3\columnwidth}
     \href{https://creativecommons.org/licenses/by/4.0/}{\includegraphics[width=0.90\textwidth]{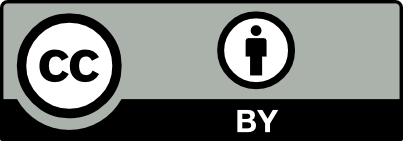}}
   \end{minipage}\hfill
   \begin{minipage}{0.7\columnwidth}
     \href{https://creativecommons.org/licenses/by/4.0/}{This work is licensed under a Creative Commons Attribution International 4.0 License.}
   \end{minipage}
 
   \vspace{5pt}
}{}{}
\DeclareRobustCommand\onedot{\futurelet\@let@token\@onedot}
\def\@onedot{\ifx\@let@token.\else.\null\fi\xspace}
\def \eg{\emph{e.g}\onedot}
\def \ie{\emph{i.e}\onedot}
\def \cf{\emph{c.f}\onedot}
\def\vs{\emph{vs}\onedot}
\def \wrt{w.r.t\onedot}
\newcommand{\name}{{HyP$^2$ Loss}}
\newcommand{\uline}[1]{\underline{#1}}
\begin{document}

\title{{\name}: Beyond Hypersphere Metric Space for Multi-label Image Retrieval}

\author{Chengyin Xu}
\authornote{Equal contributions. Listing order is random.}
\authornote{Work done when the authors were interns at Tencent AI Lab.}
\affiliation{%
  \institution{SIGS, Tsinghua University}
  \country{}
}
\author{Zenghao Chai}
\authornotemark[1]
\authornotemark[2]
\affiliation{%
  \institution{SIGS, Tsinghua University}
  \country{}
}
\author{Zhengzhuo Xu}
\authornotemark[2]
\affiliation{%
  \institution{SIGS, Tsinghua University}
  \country{}
}
\author{Chun Yuan}
\authornote{Corresponding authors. yuanc@sz.tsinghua.edu.cn, fanyanbo0124@gmail.com}
\affiliation{%
  \institution{SIGS, Tsinghua University}
  \country{}
}
\affiliation{%
  \institution{Peng Cheng National Laboratory}
  \country{}
}

\author{Yanbo Fan}
\authornotemark[3]
\affiliation{%
  \institution{Tencent AI Lab}
  \country{}
}
\author{Jue Wang}
\affiliation{%
  \institution{Tencent AI Lab}
  \country{}
}

\setlength{\textfloatsep}{6pt}
\setlength{\floatsep}{8pt}
\setlength{\intextsep}{10pt}
\setlength{\abovecaptionskip}{0pt}
\setlength{\belowcaptionskip}{0pt}

\renewcommand{\shortauthors}{Chengyin Xu et al.}

\begin{abstract}
  Image retrieval has become an increasingly appealing technique with broad multimedia application prospects, where deep hashing serves as the dominant branch towards low storage and efficient retrieval. In this paper, we carried out in-depth investigations on metric learning in deep hashing for establishing a powerful metric space in multi-label scenarios, where the pair loss suffers high computational overhead and converge difficulty, while the proxy loss is theoretically incapable of expressing the profound label dependencies and exhibits conflicts in the constructed hypersphere space. To address the problems, we propose a novel metric learning framework with Hybrid Proxy-Pair Loss ({\name}) that constructs an expressive metric space with efficient training complexity {\wrt} the whole dataset. The proposed {\name} focuses on optimizing the hypersphere space by learnable proxies and excavating data-to-data correlations of irrelevant pairs, which integrates sufficient data correspondence of pair-based methods and high-efficiency of proxy-based methods. Extensive experiments on four standard multi-label benchmarks justify the proposed method outperforms the state-of-the-art, is robust among different hash bits and achieves significant performance gains with a faster, more stable convergence speed. Our code is available at \href{https://github.com/JerryXu0129/HyP2-Loss}{https://github.com/JerryXu0129/HyP2-Loss}.
\end{abstract}

\begin{CCSXML}
<ccs2012>
   <concept>
       <concept_id>10010147.10010178.10010224.10010240.10010241</concept_id>
       <concept_desc>Computing methodologies~Image representations</concept_desc>
       <concept_significance>500</concept_significance>
       </concept>
   <concept>
       <concept_id>10002951.10003317.10003338.10003346</concept_id>
       <concept_desc>Information systems~Top-k retrieval in databases</concept_desc>
       <concept_significance>500</concept_significance>
       </concept>
 </ccs2012>
\end{CCSXML}

\ccsdesc[500]{Computing methodologies~Image representations}
\ccsdesc[500]{Information systems~Top-k retrieval in databases}

\keywords{Image Retrieval, Deep Hashing, Multi-label, Metric Learning}
\begin{teaserfigure}
  \centering
  \begin{overpic}[trim=0cm 0cm 0cm 0cm,clip,width=0.85\linewidth,grid=false]{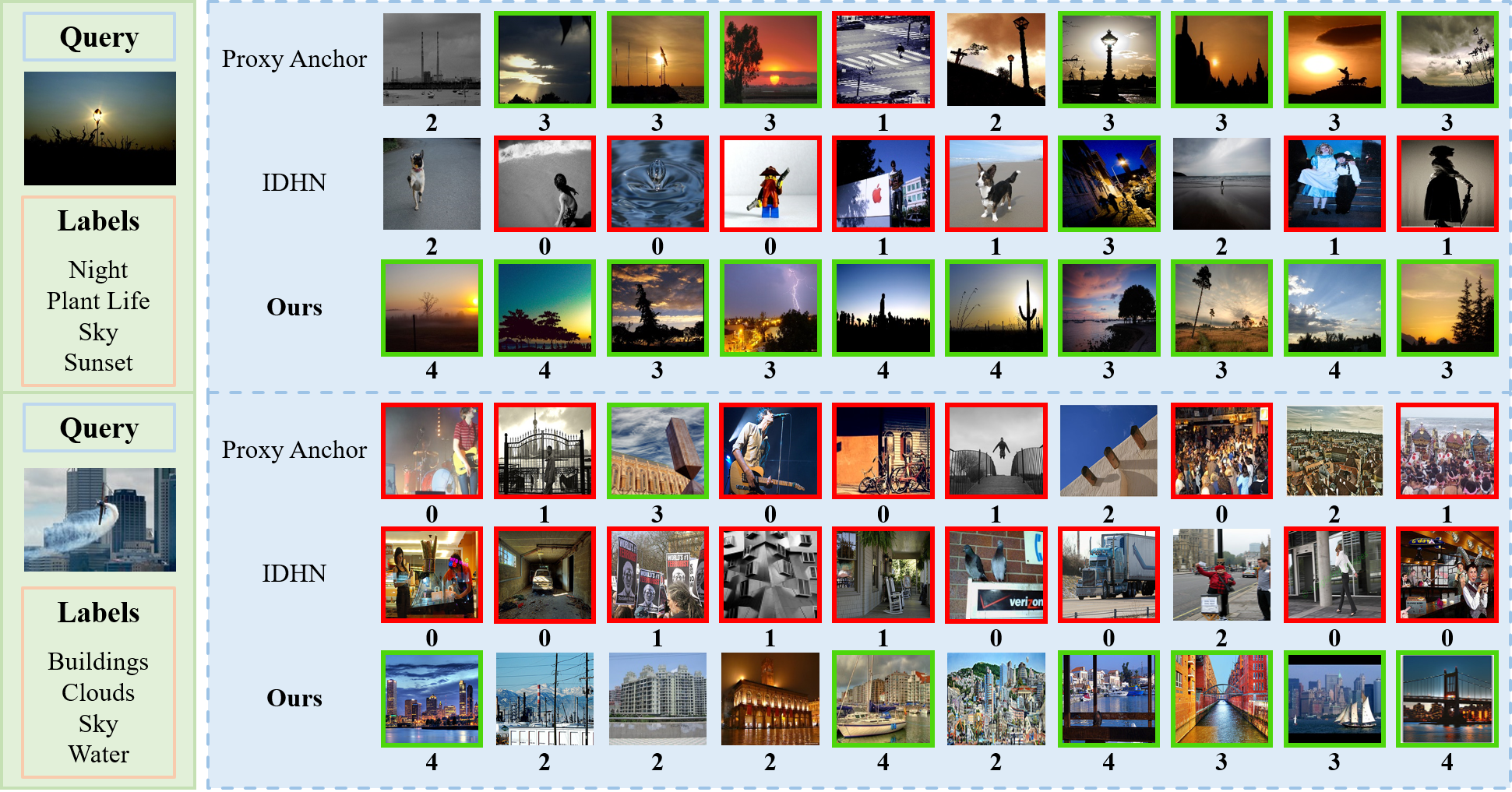}
  \put(18,46){\cite{Proxy_anchor_loss}}
  \put(18,38){\cite{IDHN}}
  \put(18,20){\cite{Proxy_anchor_loss}}
  \put(18,12){\cite{IDHN}}
  \end{overpic}
 \caption{Retrieval comparisons to previous methods~\cite{IDHN, Proxy_anchor_loss} $\bm{\&}$ ours on the Flickr-25k~\cite{flickr} dataset (top) $\bm{\&}$ NUS-WIDE~\cite{nuswide} dataset (bottom). The Top-$\bm{10}$ images are returned according to the Hamming distance between the query image and the database. The number below each retrieved image indicates the matched categories, green (red) box indicates matched categories $\Scale[1]{\bm{\geq75\%}}$  ($\Scale[1]{\bm{\leq25\%}}$). The proposed {\name} outperforms state-of-the-art~\cite{Proxy_anchor_loss,IDHN} with fewer contradictions and misclassified results.
 }
  \label{fig:teaser}
\end{teaserfigure}

\maketitle

\section{Introduction}

The past decades have witnessed the arrival of the era of big data, floods of images are uploaded to social platforms and search engines day and night, which calls for more efficient and accurate image retrieval in multimedia applications~\cite{MM1, MM2, MM3, MM4, multi-label}. Real-world images typically contain more than one attribute. Hence, the multi-label retrieval task~\cite{multi_survey, multi-label} serves as the crucial and more challenging branch in large-scale image retrieval.

Hashing techniques~\cite{IR, IR2} are widely used to accelerate retrieval due to the low storage and computation costs. The retrieval system can utilize an efficient bit-wise XOR operation to estimate the distance between hash code pairs. Following the prosperous progress of Deep Neural Networks (DNNs) in visual recognition, deep hashing~\cite{CNNH, HashNet} has achieved glary attention and become one of the most substantial research topics in the image retrieval community~\cite{hashreview, deep, hashing}. The target of deep hashing is to project numerous samples into the hyper metric space and then convert them into compact binary codes through hash functions.
The parameterized networks are optimized such that semantically similar data (\ie, images with the same categories) are well clustered and distributed in the established metric space. Such quality and expressiveness of the metric space are optimized through elaborate-designed loss functions in a supervised manner during metric learning.

Pair-based methods~\cite{DSRH, NINH, OLAH, IAH, IDHN} are predominant in multi-label retrieval, which directly consider data-to-data connections in a mini-batch. However, such approaches prohibitively confront high training complexity that require square~\cite{IDHN,contrastive, contrastive2,smart,Signature94} or even higher~\cite{Triplet,N-pair,Lifted} complexity {\wrt} the number of training samples. Furthermore, the data-to-data correlations in a mini-batch could deteriorate the robustness and degrade the learned metric space because of the increasing overfitting risks and training instability.

Compared to the pair-based methods, proxy-based methods~\cite{Proxy_anchor_loss, Proxy-NCA} are more efficient and effective in single-label retrieval to embed samples into a proxy-centered hypersphere space. However, in multi-label scenarios, we observe and theoretically prove that they are limited to expressing profound correlations. With the exponentially increasing combination growth among labels, the inclusive ({\ie}, relevant categories) and exclusive ({\ie}, irrelevant categories) relations cannot get well established supervised by proxy loss. Hence, proxy-based methods exhibit conflicts (see Fig.~\ref{fig.method_illustrate}) and are unsatisfactory in such scenarios.

To overcome the above weaknesses,
we propose Hybrid Proxy-Pair Loss ({\name}) to embed samples into expressive hyperspace to establish abundant label correlations with an efficient training complexity. 
Concretely, we conceive the first part of {\name} by setting the learnable proxy for each category such that the established metric space roughly clusters samples of similar categories.
Note that simply adding the pair loss will introduce overwhelmed training complexity and is fruitless to performance gains. We creatively design additional irrelevant pair constraints as the second part to compensate for the missing multi-label data-to-data correlations, such that enables {\name} to alienate irrelevant samples to avoid attribute conflicts effectively.
Finally, we design the overall loss function with the learning algorithm that ensures more efficient training complexity than pair-based methods, since our predominant part during training is linear correlated to the total training dataset. The elaborate-designed loss functions and training framework guarantee that the established metric space contains expressive multi-label correlations.

To justify the effectiveness and efficiency of our framework, we conduct comprehensive experiments on four multi-label benchmark datasets, {\ie}, Flickr-25k~\cite{flickr}, VOC-2007~\cite{voc}, VOC-2012~\cite{voc}, and NUS-WIDE~\cite{nuswide}. Compared to existing state-of-the-art~\cite{OrthoHash, IDHN, CSQ, OLAH}, the proposed method outperforms existing techniques among different hash bits and backbones quantitatively and qualitatively with better convergence speed and stability. Additionally, in-depth ablation studies and visualization results justify the effectiveness of our mechanism and the designed loss function.

To summarize, our main contributions are three-fold:
\begin{itemize}[leftmargin=*,nosep,nolistsep]

\item We prove that the hypersphere metric space established by existing proxy-based methods is limited to expressing the profound inclusive and exclusive relations in multi-label scenarios. To the best of our knowledge, this is the first work to theoretically analyze the upper bound of distinguishable hypersphere number in metric space in multi-label image retrieval task.

\item We propose {\name}, a novel loss function that integrates the efficient time complexity of proxy-based methods and strong data correlations of pair-based methods. Particularly, the elaborate-designed multi-label proxy loss, irrelevant pair loss, and overall learning framework contribute well to embedding multi-label images into expressive metric space.

\item We conduct extensive experiments on four benchmarks to demonstrate the superiority and robustness of our proposed {\name}, which is also feasible to various deep hashing methods and backbones. {\name} exhibits its outperforming retrieval performance on both convergence speed and retrieval accuracy compared to the existing state-of-the-art.
\end{itemize}
\vspace{-2pt}

\section{Related Work}



Many hashing algorithms~\cite{IR, IR2, ITQ,SH,KSH,SDH} have been proposed to obtain compact binary codes, which are seminal solutions to reduce the storage and calculation overhead in large-scale image retrieval. Deep hashing~\cite{deep, hashing, CNNH, HashNet} has become the mainstream for the superiority of CNNs~\cite{lenet, alexnet, googlenet} in feature extraction, especially in scenarios where images are associated with more than one attribute. The common to all is to design the loss functions to establish powerful metric space and precise hashing positions. 
%



\myparagraph{Pair-based Methods.}
Pair-based methods~\cite{DSRH, IAH, DMSSPH, OLAH, IDHN, RCDH} are predominant in multi-label retrieval~\cite{multi_survey}, which focus on exploring data-to-data relations from the paired samples through metric learning. In the field of image retrieval, Constrictive loss~\cite{contrastive, contrastive2} innovatively determines the gradient descent directions by estimating the similarity between feature vector pairs. Based on it, CNNH~\cite{CNNH} and DPSH~\cite{DPSH} utilize CNNs to extract the features of given images. To reveal the local optima risks in pair-wise loss~\cite{DPSH}, Triplet Loss~\cite{Triplet, DTSH} associates the anchor with one positive and one negative sample for the loss calculation process. 

Recently, researchers concentrate on exploring the profound attribute correlations in challenging multi-label retrieval~\cite{multi_survey}. The seminal work DSRH~\cite{DSRH} introduces CNN-based Triplet Loss to estimate the semantic distance according to the sorted labels. IAH~\cite{IAH} divides hash codes into groups to separately excavate instance-aware image representations. DMSSPH~\cite{DMSSPH} and RCDH~\cite{RCDH} further improve the performance by considering the semantic similarity by supervision on grouped labels and additional regularization.
%
\indent Pair-based methods fully excavate the data correlations and exhibit satisfactory performance. However, the training complexity generally requires square or cubic complexity related to the entire large-scale images. Hence, such approaches suffer high computational consumption and converge difficulty, especially are more serious and ineluctable in multi-label scenarios.

\myparagraph{Proxy-based Methods.}
To address the challenging issues in pair-based methods, proxy-based methods are proposed to improve model robustness with efficient training complexity in single-label scenarios. Some methods~\cite{CSQ, OrthoHash, DPN} attempt to alleviate the training difficulty by fixing manually-selected or predefined hash centers. Such predefined centers are regarded as specific proxies for corresponding categories. Hence, the training complexity is affordable because each sample only interacts with a few class proxies.

However, the artificially designed proxies ignore the semantic relationship between intra-class and inter-class. To fill this gap, existing state-of-the-art~\cite{Proxy-NCA,Manifold,Proxy_anchor_loss} regard the hash centers as trainable parameters. Proxy NCA~\cite{Proxy-NCA} calculates the distance between each proxy and positive \& negative samples, while Manifold Proxy Loss~\cite{Manifold} improves performance with the manifold-aware distance to measure the semantic similarity. Proxy Anchor Loss~\cite{Proxy_anchor_loss} integrates both advantages of pair-based and proxy-based schemes with \textit{Log-SumExp} function. It individually considers the distances between different samples and proxies to tackle the hard-pair challenges.

Although the proxy-based methods achieve performance better or on par with pair-based ones with faster convergence speed and promising training overhead in terms of single-label datasets~\cite{cifar, ImageNet}, they always fail when come into multi-label scenarios and hence haven't been fully investigated before. We further elaborate on the reasons and propose our novel {\name} solution further.

\section{Methodology}

\subsection{Task Definition}

Given a training set $\mathscr{D}_M\coloneqq\{({\bm{x}}_i,\bm{y}_i)\}_{i=1}^M$ composed of $M$ data points $\mathcal{X}\coloneqq\{\bm{x}_i\}_{i=1}^M \in \mathds{R}^{D\times M}$ and corresponding label $\mathcal{Y}\coloneqq\{\bm{y}_i\}_{i=1}^M \in \{0,1\}^{C\times M}$, where $D$ represents the resolution of images and $C$ denotes the category numbers, respectively. The image $\bm{x}_i$ contains the attribute of class $y_j$ iff ${\bm{y}_i}_{(j)}=1$. In multi-label scenarios, each sample contains at least one attribute, {\ie}, $\sum\nolimits_{j = 1}^C {{\bm{y_i}}_{(j)}}  \ge 1$.
The target for deep hashing is to learn a feature extractor $\mathcal{F}$ parameterized by $\Theta$ that encodes each data point $\bm{x}_i$ into a compact $K$-bit feature vector $\bm{v}_i \coloneqq \mathcal{F}_{\Theta}(\bm{x}_i) \in \mathds{R}^{K}$ in metric space, and maps into $K$-bit binary hash code $\bm{b}_i \coloneqq \mathcal{H}(\bm{v}_i) \in \{-1, 1\}^K$ through the hashing function $\mathcal{H}$ in the Hamming space. 
Hence, the image-wise similarity is preserved in the Hamming space.
For given query image $\bm{x}_q$, we sort the hash codes for all the samples in the database according to their Hamming distance, and return the Top-$N$ images as the query results.
The core challenge of this task is to learn a reliable feature extractor $\mathcal{F}_{\Theta}^*$ to cluster images of different categories with proper and distinguishable hash positions.

\begin{figure*}[t!]
    \centering
    \begin{overpic}[trim=0cm 0cm 0cm 0cm,clip,width=1\linewidth,grid=false]{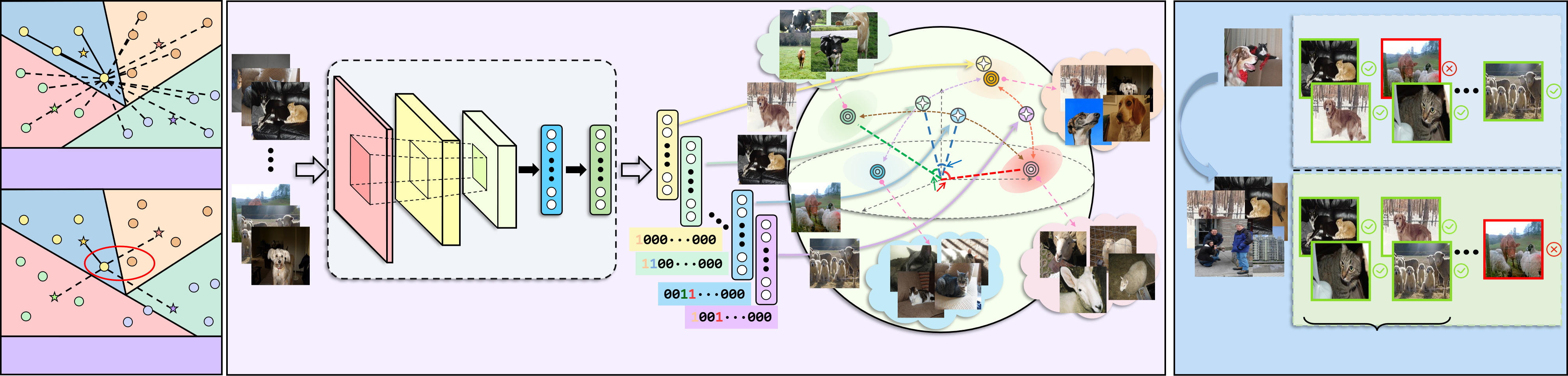}
    \put(2.5,12.5){(a). Pair loss}
    \put(2,0.5){(b). Proxy loss}
    \put(30,0.5){(c). Illustration of the proposed {\name}}
    \put(80,0.5){(d). Retrieval results}

    \put(57,11.3){\tiny {$\Scale[0.8]{\mathcal{L}_{proxy}}$}}
    \put(61,14){\tiny {$\Scale[0.8]{\mathcal{L}_{neg\_pair}}$}}
    
    \put(25,4.5){\scriptsize CNN Feature Extractor}
    \put(15,4.5){\scriptsize mini-batch}
    \put(16,3.5){\scriptsize Input}
    
    \put(35.5,3.5){\scriptsize Corresponding }
    \put(35.5,2){\scriptsize multi-label G.T.}
    
    \put(39.4,19){\scriptsize $K$-bits continuous}
    \put(40.8,18){\scriptsize feature vector}

    \put(60,22.5){\scriptsize $K$-dimensional metric space}

    \put(53,15.5){\scriptsize $\bm{p}_3$}
    \put(67,13){\scriptsize $\bm{p}_4$}    
    \put(54,13){\scriptsize $\bm{p}_2$}
    \put(63,17.5){\scriptsize $\bm{p}_1$}

    \put(57.3,17){\scriptsize $\bm{v}_1$}
    \put(61,17.3){\scriptsize $\bm{v}_2$}    

    \put(61,20.5){\scriptsize $\bm{v}_3$}
    \put(63.5,16.5){\scriptsize $\bm{v}_4$}   
    
    \put(77,5){\scriptsize Database}
    
    \put(75.8,15){\scriptsize Retrieve from}
    \put(76.5,14){\scriptsize database}
    
    \put(78.5,22.5){\scriptsize \textbf{Query}}
    \put(77,17.5){\scriptsize Dog \& Cat}

    \put(82.5,22){\scriptsize Dog \& Cat}
    \put(88,22){\scriptsize Cow \& Sheep}
    \put(93,20.5){\scriptsize Dog \& Sheep}
    \put(84.5,14){\scriptsize Dog}
    \put(90,14){\scriptsize Cat}
    \put(93.5,14.5){\scriptsize \textbf{Proxy Loss}}
    
    \put(82.5,12){\scriptsize Dog \& Cat}
    \put(89,12){\scriptsize Dog}
    \put(93,10.5){\scriptsize Cow \& Sheep}
    \put(84.5,3.7){\scriptsize Cat}
    \put(88,3.7){\scriptsize Dog \& Sheep}
    \put(95.5,4.2){\scriptsize \textbf{Ours}}

    \put(84,2){\scriptsize \textbf{Top-$\bm{N}$ returned}}
    
    \end{overpic}
    \caption{Difference and novelty of {\name} {\vs} previous losses. (a). Pair-based methods require square or even cubic complexity {\wrt} the whole dataset. (b). Proxy-based methods are efficient that only consider proxy-to-data relations but ignore data-to-data relation and exhibits conflicts in multi-label retrieval. (c). The novel {\name} extends proxy loss into multi-label scenarios to construct powerful metric space beyond hypersphere ones. The Multi-label Proxy Loss $\bm{\mathcal{L}_{proxy}}$ establishes a proper metric space with training efficiency guaranteed, while the elaborate-designed irrelevant pair constraint $\bm{\mathcal{L}_{neg\_pair}}$ alienates irrelevant pairs to address conflicts between $\bm{(v}_1,\bm{v}_2\bm{)}$. (d). Given the query, proxy-based methods misclassify the irrelevant samples into the Top-$\bm{N}$ results. As a comparison, {\name} achieves more accurate top returned retrieval results.}
    \vspace{-12pt}
    \label{fig.method_illustrate}
\end{figure*}

\subsection{Motivation} \label{Sec.motivation}

\myparagraph{Upper Bound of Distinguishable Hypersphere Number in Metric Space.}
Metric learning serves as the substantial procedure for deep hashing. It focuses on embedding the 2D images that consist of various attributes into the $K$-dimensional metric space (\ie, through their feature vectors $\bm{v}$), where similar samples (\ie, with closer categories) should get clustered.

For image retrieval, it is challenging to establish precise mapping that encodes the input images into the ideal metric space, especially in large-scale datasets~\cite{ImageNet} that should consider training complexity. 

In single-label retrieval (\eg, ImageNet~\cite{ImageNet}, CIFAR~\cite{cifar}), the label correlations are restricted among one positive label with negative others. Hence, proxy-based methods~\cite{Proxy-NCA, Manifold, Proxy_anchor_loss} are effective to embed samples around the class proxies in the metric space. Ultimately, samples tend to distribute in a hypersphere centered by predefined or learnable proxies, where similar proxies are close while heterogeneous others are well alienated.
While in multi-label retrieval, the inclusive and exclusive relations ({\ie}, the ability of metric space to distinguish its relevant and irrelevant categories) is exponential {\wrt} category number $C$.

However, the isotropic ({{\ie}, perfectly symmetrical}) hypersphere has inherent side-effect in such scenarios. When the labels of one sample larger than two, the label correlations cannot get fully expressed in the hypersphere space. Specifically, we have the following Theorem.~\ref{theo:circle} to illustrate the upper bound of distinguishable hypersphere number $\Omega(K,C)$ (or called the maximum inclusive and exclusive relations) in multi-label scenarios. Please refer to the supplementary for proof.

\begin{theorem} 
\label{theo:circle}
For the $K$-dimensional metric space $\mathds{R}^K$ with $C$ hypersphere $\mathds{S} \in \mathds{R}^{K}$. The upper bound of distinguishable hypersphere number $\Omega(K,C)$ cannot enumerate the ideal $ \Omega^*(K,C)=\sum\nolimits_{c = 0}^C \tbinom{C}{c} =2^C$ when $C>K+1$. The upper bound is limited at:
\begin{equation}
\tilde{\Omega}(K,C) = \mathop {\sup}\nolimits_{\mathds{S}} {{\Omega}(K,C)}=\tbinom{C-1}{K}+\sum\nolimits_{k=0}^K \tbinom{C}{k}<2^C
\end{equation}
\end{theorem}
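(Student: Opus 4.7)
The plan is to interpret $\Omega(K,C)$ as the maximum number of non-empty cells in an arrangement of $C$ hyperspheres in $\mathds{R}^K$, and then bound this cell count by a Schl\"afli-style recursion.

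First, I would set up the geometric reformulation. Attach each category $c\in\{1,\ldots,C\}$ to its hypersphere $\mathds{S}_c\subset\mathds{R}^K$, whose interior is intended to enclose precisely the feature vectors carrying label $c$. A label subset $S\subseteq\{1,\ldots,C\}$ is distinguishable only when the cell $\bigcap_{c\in S}\mathrm{int}(\mathds{S}_c)\cap\bigcap_{c\notin S}\mathrm{ext}(\mathds{S}_c)$ is non-empty, so $\Omega(K,C)\le R(K,C)$, where $R(K,C)$ denotes the supremum of cell counts of the arrangement $\mathcal{A}(\mathds{S}_1,\ldots,\mathds{S}_C)$; the ideal $\Omega^{*}(K,C)=2^C$ asks every subset to be realized.

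Second, I would establish the recurrence $R(K,C)\le R(K,C-1)+R(K-1,C-1)$ by incremental insertion: when $\mathds{S}_C$ is added to an arrangement of the first $C-1$ hyperspheres, the number of newly created cells equals the number of connected pieces of $\mathds{S}_C\cong S^{K-1}$ cut out by its intersections with $\mathds{S}_1,\ldots,\mathds{S}_{C-1}$. These intersection loci are $(K-2)$-subspheres of $\mathds{S}_C$, and stereographic projection from a generic pole turns them into $C-1$ hyperspheres in $\mathds{R}^{K-1}$ in general position, yielding the stated bound. With base cases $R(K,0)=1$ and $R(K,1)=2$, the recurrence solves in closed form to $\tilde{\Omega}(K,C)=\tbinom{C-1}{K}+\sum_{k=0}^{K}\tbinom{C}{k}$, as verified by the telescoping identity $\tilde{\Omega}(K,C)-\tilde{\Omega}(K,C-1)=\tbinom{C-2}{K-1}+\sum_{k=0}^{K-1}\tbinom{C-1}{k}=\tilde{\Omega}(K-1,C-1)$, which is a one-line application of Pascal's rule.

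Third, expanding $2^C=\sum_{k=0}^{C}\tbinom{C}{k}$ and applying Pascal once gives the gap formula $2^C-\tilde{\Omega}(K,C)=\tbinom{C-1}{K+1}+\sum_{k=K+2}^{C}\tbinom{C}{k}$, which is strictly positive precisely when $C>K+1$ and vanishes at $C=K+1$, exactly matching the statement.

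The main obstacle is the stereographic-projection step: one must justify the general-position reduction---tangencies, triple intersections, and passages through the projection pole form a codimension-$\ge 1$ exceptional locus that can be perturbed away without decreasing the cell count---and verify that the correspondence between new cells in $\mathds{R}^K$ and surface regions on $\mathds{S}_C$ is a bijection rather than a mere injection. Once this geometric lemma is in place, the recurrence and the closed-form identities go through essentially mechanically.
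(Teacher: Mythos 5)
Your proposal is correct and rests on the same core idea as the paper's proof: both interpret $\Omega(K,C)$ as the cell count of an arrangement of $C$ hyperspheres and derive the Schl\"afli-type recurrence $\Omega(K,C)=\Omega(K,C-1)+\Omega(K-1,C-1)$ by incremental insertion, counting the new cells via the pieces into which the existing spheres cut the surface of the inserted one. The differences are in execution, and yours is tighter in two places. First, where the paper simply asserts that the $C-1$ intersection subspheres carve the new sphere's surface into at most $\Omega(K-1,C-1)$ pieces, you reduce this to an arrangement of $C-1$ hyperspheres in $\mathds{R}^{K-1}$ via stereographic projection from a generic pole, which is the honest justification (and you correctly flag the general-position and bijectivity issues that both proofs ultimately leave informal). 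Second, the paper solves the recurrence by computing the $K=2$ case explicitly and then running an induction on $C$, and proves strictness for $C>K+1$ by a second induction that needs the ad hoc estimate $2^{3+I}>I^2+5I+8$; you instead verify the closed form by the one-line telescoping identity $\tilde{\Omega}(K,C)-\tilde{\Omega}(K,C-1)=\tilde{\Omega}(K-1,C-1)$ and obtain strictness directly from the gap formula $2^C-\tilde{\Omega}(K,C)=\tbinom{C-1}{K+1}+\sum\nolimits_{k=K+2}^{C}\tbinom{C}{k}$, which visibly vanishes at $C=K+1$ and is positive for $C\ge K+2$. Both of your identities check out under Pascal's rule, so your route buys a cleaner and less error-prone algebraic finish at the cost of having to justify the projection lemma, while the paper's inductions avoid that lemma's precision but are considerably longer.
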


\myparagraph{Embedding Position Conflicts in Multi-label Scenarios.}
Another limitation of proxy-based methods in multi-label scenarios is that, some irrelevant samples ({\ie}, without the same categories) are inevitable to be embedded into nearby positions. The primary reason is that the proxy loss only considers the proxy-to-data distance but misses the data-to-data constraints, the irrelevant samples associated with different attributes will be potentially encoded into the close positions nearby the middle of category proxies.

The proxy loss will enforce samples with multiple attributes embedded into the middle among these proxies, because such hash position
ensures images with identical multi-label retrieved by query images in priority, second by images containing partially same attributes. Hence, proxy loss encourages multi-label samples to converge nearby the middle of proxies to achieve the optimal solution.
Intuitively, as illustrated in Fig.~\ref{fig.method_illustrate}{(c)}, suppose the proxy set $\mathscr{P}_4=\{\bm{p}_1,\cdots,\bm{p}_4\}$ is well embedded into the metric space, where the proxy loss between any sample $(\bm{x},\bm{y})$ and $\mathscr{P}$ has converged. 
Hence, image $\bm{x}_1$ associated with $y_1,y_2$ ({\eg, dog \& cat}) will be embedded into the middle of $\bm{p}_1$ and $\bm{p}_2$, while image $\bm{x}_2$ with $y_3,y_4$ ({\eg, cow \& sheep}) will be embedded into the middle of $\bm{p}_3$ and $\bm{p}_4$. However, the missing data-to-data correlations ignore the attribute conflicts between the irrelevant $\bm{x}_1$ and $\bm{x}_2$. Although proxy loss explicitly alienates $\bm{x}_1$ and $\bm{p}_3,\bm{p}_4$, $\bm{x}_1$ is not guaranteed away from the middle of $\bm{p}_3$ and $\bm{p}_4$, which is exactly the embedding position of $\bm{x}_2$. Hence, the dissimilar samples $\bm{x}_1$ and $\bm{x}_2$ are entangled in the metric space. For given query $\bm{x}_q$ with attributes $y_1,y_2$, $\bm{x}_1$ may get retrieved first, second by the closest sample $\bm{x}_2$, but ignores some relevant samples $(\bm{x}_3,y_1)$ or $(\bm{x}_4,\{y_1,y_4\})$.
The misclassified conflicts among multi-label datasets become more prominent. See Fig.~\ref{fig:teaser} for some examples, the proxy-based method wrongly retrieves the results of given multi-label images.

\subsection{Hybrid Proxy-Pair Loss}

Sec.~\ref{Sec.motivation} reveals the primary reasons that proxy-based methods are unsatisfactory in multi-label scenarios, {\ie}, simply embedding samples distributed among a proxy-centered hypersphere cannot comprehensively introduce the combination among various categories, and the proxy-to-data supervisions ignore data-to-data attribute conflicts. As a result, some crucial label correlations may not be well-expressed, especially under large-scale datasets with limited $K$-bit hash codes.

The above observations motivate us to consider the data-to-data relations that contribute to a powerful metric space to represent the correlations among various attributes. 
To avoid constructing the metric space into an isotropic hypersphere without loss of training efficiency, we creatively propose Hybrid Proxy-Pair Loss ({\name}) for metric learning to extend the proxy loss into challenging multi-label scenarios, and compensate for the local optimum and overfitting risks of pair loss in exploring data-to-data relations.
The carefully designed {\name} depicts a superior metric space to fully express the profound label correspondences. The overview framework of {\name} is illustrated in Fig.~\ref{fig.method_illustrate}, and the details of each component are elaborated as follows.

\myparagraph{Multi-label Proxy Loss.}
Firstly, we set $C$ learnable proxies $\mathscr{P}_C=\{\bm{p}_1,\cdots,\bm{p}_C\}$, each $\bm{p}_i\in \mathscr{P}_C$ is a compact $K$-bit vector that is exclusive for each category. For a given feature vector $\bm{v}_i\coloneqq \mathcal{F}_{\Theta}(\bm{x}_i)$ and corresponding label $\bm{y}_i$, the energy term between any $\bm{v}_i$ and $\bm{p}_j$ is $\cos_{+}(\bm{v}_i,\bm{p}_j) \coloneqq -\cos\langle\bm{v}_i,\bm{p}_j\rangle=-\frac{|\bm{v}_i\cdot\bm{p}_j|}{|\bm{v}_i|\cdot|\bm{p}_j|}$ iff they are a positive pair, {\ie}, ${\bm{y}_i}_{(j)}=1$. Otherwise, $\bm{v}_i$ and $\bm{p}_j$ are a negative pair. The energy term is defined as $\cos_{-}(\bm{v}_i,\bm{p}_j)\coloneqq \left(\cos \langle\bm{v}_i,\bm{p}_j\rangle-\zeta\right)_+=\max \left(\frac{|\bm{v}_i\cdot \bm{p}_j|}{|\bm{v}_i|\cdot|\bm{p}_j|} - \zeta,0\right)$, where $\zeta=\zeta(C,K)$ is a margin term that follows HHF~\cite{HHF}.
Then, the first term of {\name} is designed as Multi-label Proxy Loss $\mathcal{L}_{proxy}$, which only optimizes the distance between proxies and samples, as Eq.~\ref{Eq.loss} illustrates.
\begin{equation}
\begin{aligned}
\mathcal{L}_{proxy}(\mathscr{D}_M,\mathscr{P}_C)&= 
\frac{{\sum\nolimits_{i = 1}^M {\sum\nolimits_{j = 1}^C {\mathds{1}({\bm{y}_i}_{(j)} = 1){{\cos }_ + }(\bm{v}_i,\bm{p}_j)} } }}{{\sum\nolimits_{i = 1}^M {\sum\nolimits_{j = 1}^C {\mathds{1}({\bm{y}_i}_{(j)} = 1)} } }} \\&+ \frac{{\sum\nolimits_{i = 1}^M {\sum\nolimits_{j = 1}^C {\mathds{1}({\bm{y}_i}_{(j)} = 0){{\cos }_{-} }({\bm{v}_i},{\bm{p}_j})} } }}{{\sum\nolimits_{i = 1}^M {\sum\nolimits_{j = 1}^C {\mathds{1}({\bm{y}_i}_{(j)} = 0)} } }}
\end{aligned},
\label{Eq.loss}
\end{equation}
where $\mathds{1}(\cdot)$ is the indicator function that equals to $1$ ($0$) iff $(\cdot)$ is True (False). The denominator term balances $\cos_+(\cdot)$ and $\cos_-(\cdot)$, such that avoids the gradient bias introduced by overmuch negative pairs.
The Multi-label Proxy Loss is used for establishing a primary metric space to ensure the samples are distributed among the cluster centers ({{\cf} Fig.~\ref{fig.tsne}}), where the correlated labels are properly clustered and irrelevant sample-proxy pairs are roughly alienated.

\myparagraph{Irrelevant Pair Loss.} 
Secondly, we focus on exploring data-to-data correlations to explicitly enforce irrelevant samples get alienated. 
To achieve this, we define the irrelevant pairs as: $(\bm{v}_i,\bm{v}_j)$ associated with label $(\bm{y}_i,\bm{y}_j)$ is irrelevant pairs iff $|\bm{y}_i\cdot \bm{y}_j|=0$ and $|\bm{y}_i|>1,|\bm{y}_j|>1$. Note that the total number of such irrelevant pairs is far fewer than the total $M\times M$ pairs, the ratio is defined as $\eta$ in Tab.~\ref{Tab:dataset}.
Suppose the subset $\mathscr{D}_{M'}'\coloneqq\{({\bm{x}}_{i},\bm{y}_{i})\}_{i=1}^{M'} \subseteq \mathscr{D}_M$ is composed of $M'$ samples, where each sample contains more than one category. Then the second term of {\name} is Irrelevant Pair Loss $\mathcal{L}_{neg\_pair}$, which is defined as Eq.~\ref{Eq.negpair}.
\begin{equation}
    \mathcal{L}_{neg\_pair}(\mathscr{D}_{M'}') = \frac{\sum\nolimits_{i=1}^{M'}\sum\nolimits_{j=1}^{M'} \mathds{1}(|\bm{y}_i\cdot \bm{y}_j|=0) \cos_-(\bm{v}_{i}, \bm{v}_{j})}{\sum\nolimits_{i=1}^{M'}\sum\nolimits_{j=1}^{M'} \mathds{1}(|\bm{y}_i\cdot \bm{y}_j|=0)},
    \label{Eq.negpair}
\end{equation}
where $\cos_-(\bm{v}_{i}, \bm{v}_{j})$ indicates the pair-wise similarity of given irrelevant samples. Compared to the entire $M\times M$ computation, the proposed $\mathcal{L}_{neg\_pair}(\mathscr{D}_{M'}')$ only considers limited pairs to mine data-to-data correlations that alienates irrelevant samples effectively without loss of efficiency.

\begin{algorithm}[t!]
    \caption{The training algorithm of the proposed {\name} for metric learning.}
    \label{Alg:training_manner}
	\renewcommand{\algorithmicrequire}{\textbf{Input:}}
	\renewcommand{\algorithmicensure}{\textbf{Output:}}
	\begin{algorithmic}[1]
		\REQUIRE Training dataset $\Scale[0.95]{\mathscr{D}_M}$, hash code length $K$, mini-batch $B$.
		\ENSURE Optimized network $\Scale[0.95]{\mathcal{F}_{\Theta}^*}$ and proxy set $\Scale[0.95]{\mathscr{P}^*_C}$.
		\STATE Initialize $\Scale[0.95]{\Theta \leftarrow \Theta^{(0)}}$, $\Scale[0.95]{\mathscr{P} \leftarrow \mathscr{P}^{(0)}}$, Epoch $T \leftarrow 0$;
		\REPEAT
	    \STATE Randomly sample a mini-batch samples $\Scale[0.95]{\mathscr{D}_{B}\coloneqq \{(\bm{x}_i,\bm{y}_i)\}_{i=1}^{B}}$;
		\STATE Compute feature vector $\Scale[0.95]{\bm{v}_i \coloneqq \mathcal{F}_{\Theta}^{(T)}(\bm{x}_i)}$ for each $(\bm{x}_i,\bm{y}_i) \in \mathscr{D}_{B}$ by forward propagation;
		\STATE Compute Multi-label Proxy Loss $\Scale[0.85]{\mathcal{L}_{proxy}(\mathscr{D}_{B},\mathscr{P}^{(T)}_C)}$ via Eq.~\ref{Eq.loss};
		\STATE Compute Irrelevant Pair Loss $\Scale[0.85]{\mathcal{L}_{neg\_pair}(\mathscr{D}_{B'}')}$ via Eq.~\ref{Eq.negpair};
		\STATE Compute Total Loss $\Scale[0.85]{\mathcal{L}_{total}(\mathscr{D}_{B},\mathscr{P}^{(T)}_C)}$ via Eq.~\ref{Eq.allloss};
		\STATE Compute gradient $\frac{{\partial \mathcal{L}_{total}}}{{\partial{\cos\langle\bm{v}_i,\bm{p}_j\rangle}}}$ and $\frac{{\partial \mathcal{L}_{total}}}{{\partial{\cos\langle\bm{v}_{i},\bm{v}_{j}\rangle}}}$ via Eq.~\ref{Eq.grad};
	    \STATE Update $\Scale[0.95]{\Theta^{(T+1)}}$ and $\Scale[0.95]{\mathscr{P}^{(T+1)}}$ by back propagation;
		\STATE $T\leftarrow T+1$;
    	\UNTIL{Convergence}
    	\STATE Return $\Scale[0.95]{\Theta^{*} \leftarrow \Theta^{(T)}}$, $\Scale[0.95]{\mathscr{P}^{*} \leftarrow \mathscr{P}^{(T)}}$.
    
	\end{algorithmic}
\end{algorithm}
\setlength{\textfloatsep}{0pt}

\myparagraph{Overall Loss \& Gradient of {\name}.}
Finally, the overall {\name} is the weighted-assumption of the above two loss terms to obtain $\mathcal{L}_{total}$, as Eq.~\ref{Eq.allloss} illustrates.
\begin{equation}
    \mathcal{L}_{total}(\mathscr{D}_M,\mathscr{P}_C) = \mathcal{L}_{proxy}(\mathscr{D}_M,\mathscr{P}_C) + \beta \mathcal{L}_{neg\_pair}(\mathscr{D}_{M'}')
    \label{Eq.allloss}
\end{equation}
where $\beta$ is a hyperparameter to balance the constraints between multi-label proxy term and irrelevant pair term.

Hence, to optimize the parameterized network $\mathcal{F}_{\Theta}$ and proxy set $\mathscr{P}_C$, the objective function is to minimize $\mathcal{L}_{total}$ of the given training set and learnable proxy set, as Eq.~\ref{Eq.obj_func} illustrates.
\begin{equation}
    \Theta^*,\mathscr{P}^*=\mathop {\arg\min}_{\Theta,\mathscr{P}} \mathcal{L}_{total}(\mathscr{D}_M,\mathscr{P}_C)
    \label{Eq.obj_func}
\end{equation}

To achieve this, the gradient of {\name} in Eq.~\ref{Eq.allloss} {\wrt} $\cos\langle\bm{v}_i,\bm{p}_j\rangle$ and $\cos\langle\bm{v}_i,\bm{v}_j\rangle$ is given by Eq.~\ref{Eq.grad}.
\begin{equation}
\begin{aligned}
\frac{{\partial {\mathcal{L}_{total}}}}{{\partial{\cos\langle\bm{v}_i,\bm{p}_j\rangle}}} &= 
\left\{\begin{array}{ll}
 -\frac{1}{\sum\nolimits_{i = 1}^M {\sum\nolimits_{j = 1}^C {\mathds{1}({\bm{y}_i}_{(j)} = 1)}}}, & {\bm{y}_i}_{(j)} = 1\\
 \frac{1}{\sum\nolimits_{i = 1}^M {\sum\nolimits_{j = 1}^C {\mathds{1}({\bm{y}_i}_{(j)} = 0)}}}, & {\bm{y}_i}_{(j)} = 0, \cos\langle\bm{v}_i,\bm{p}_j\rangle > \zeta\\
 0, & {\bm{y}_i}_{(j)} = 0, \cos\langle\bm{v}_i,\bm{p}_j\rangle \leq \zeta
\end{array} \right.
\\
\frac{{\partial {\mathcal{L}_{total}}}}{{\partial{\cos\langle\bm{v}_{i},\bm{v}_{j}\rangle}}} &=
\left\{\begin{array}{ll} 
\frac{\beta}{\sum\nolimits_{i = 1}^{M'} {\sum\nolimits_{j = 1}^{M'} {\mathds{1}(|\bm{y}_i\cdot \bm{y}_j|=0)}}}, & |\bm{y}_i\cdot \bm{y}_j|=0, \cos\langle\bm{v}_{i},\bm{v}_{j}\rangle > \zeta\\
0, & |\bm{y}_i\cdot \bm{y}_j|=0,  \cos\langle\bm{v}_{i},\bm{v}_{j}\rangle \leq \zeta 
 \end{array} \right.
\end{aligned}
\label{Eq.grad}
\end{equation}

Eq.~\ref{Eq.grad} shows that minimizing the {\name} enforces $\bm{v}_i$ and $\bm{p}_j$ to get close if the two share the same attributes, and distinguishes the irrelevant proxy-to-data/data-to-data pairs simultaneously. When {\name} convergences, we thus construct the powerful metric space by mapping the images from the database into continuous feature vectors, and binarizing into hash codes in the Hamming space for efficient retrieval.

\subsection{Overview of Learning Algorithm}

\myparagraph{Training Algorithm.}
With the novel {\name}, we ensure the constructed metric space is more powerful than existing proxy-based methods~\cite{Proxy-NCA, Manifold, Proxy_anchor_loss} both theoretically and experimentally, because {\name} explicitly enforces the established metric space considers the data-to-data correspondences that tackles the conflicts effectively.
To achieve this, we present the training algorithm in Algo.~\ref{Alg:training_manner}. During the training process, the standard back-propagation algorithm~\cite{back} with mini-batch gradient descent method is used to optimize the network.

\begin{table}[t!]
\setlength{\tabcolsep}{12pt}
\centering
\caption{\textbf{Training complexity of {HyP$\Scale[0.9]{\bm{^2}}$} \vs previous state-of-the-art.} Note that proxy-based methods are unqualified for multi-label metric learning, while pair-based methods require $\Scale[0.9]{\bm{\mathcal{O}(M^2)}}$ or even $\Scale[0.9]{\bm{\mathcal{O}(M^3)}}$ time complexity. As a comparison, HyP$\Scale[0.9]{\bm{^2}}$ costs more efficient training complexity since the irrelevant pairs are the minority to the whole dataset.}
\resizebox{1\linewidth}{!}{%
\scriptsize
\begin{tabular}{l|l|l}
\toprule
Type                   & Method             & Time Complexity                              \\ \midrule
\multirow{3}{*}{Proxy} & Proxy NCA~\cite{Proxy-NCA}         & $\mathcal{O}(MC)$                                        \\
                       & Proxy Anchor~\cite{Proxy_anchor_loss}         & $\mathcal{O}(MC)$                                        \\
                        & OrthoHash~\cite{OrthoHash}       & $\mathcal{O}(MC)$ \\
                       & SoftTriple~\cite{soft}       & $\mathcal{O}(MCk^2)$                    \\
                                         \midrule
\multirow{8}{*}{Pair}  & Constrastive~\cite{contrastive, contrastive2,Signature94}      & $\mathcal{O}(M^2)$                      \\               &HashNet~\cite{HashNet}&$\mathcal{O}(M^2)$\\
                       & DHN~\cite{DHN}&$\mathcal{O}(M^2)$\\
                       & IDHN~\cite{IDHN}&$\mathcal{O}(M^2)$\\
                       & Triplet (Smart)~\cite{smart}   & $\mathcal{O}(M^2)$                    \\
                      & Triplet (Semi-Hard)~\cite{Triplet} & $\mathcal{O}(M^3/B^2)$ \\
                       & $N$-pair~\cite{N-pair}           & $\mathcal{O}(M^3)$                     \\
                       & Lifted Structure~\cite{Lifted}   & $\mathcal{O}(M^3)$                     \\ \midrule
\textbf{Ours}                   & {\name}                & $\mathcal{O}(MC + \eta M^2)$                                     \\ 
\bottomrule
\end{tabular}%
}
\label{Tab:time_comp}
\vspace{-2pt}
\end{table}
\setlength{\textfloatsep}{0pt}

\begin{table}[t!]
\caption{Statistics of four benchmarks, where $\bm{\eta}$ indicates the ratio of irrelevant sample pairs with multiple labels to all sample pairs in the dataset.}
\setlength{\tabcolsep}{3pt}
\resizebox{1\linewidth}{!}{%
\scriptsize
\begin{tabular}{l|cccccc}
\toprule
Datasets  & \# Dataset & $C$ & \# Database & \# Train & \# Query  & $\eta$ \\ \midrule
Flickr-25k & 24,581     & 38            & 19,581      & 4,000       & 1,000   & 0.286 \\
NUS-WIDE  &  195,834    & 21            & 183,234     & 10,500      & 2,100   & 0.242 \\
VOC-2007   & 9,963      & 20            & 5,011       & 5,011       & 4,952   & 0.062 \\
VOC-2012   & 11,540     & 20            & 5,717       & 5,717       & 5,823    & 0.055\\ 
\bottomrule
\end{tabular}%
}
\label{Tab:dataset}
\end{table}
\setlength{\textfloatsep}{0pt}

\myparagraph{Time Complexity Analysis.}
The proposed method converges faster and is proven more efficient and stable than those pair-based methods~\cite{DSRH, IAH, OLAH, RCDH, IDHN} (as we will justify in Fig.~\ref{fig.converge}). Below we analyze the training complexity of {\name}.
Note that $M$, $C$, $B$, and $k$ denote the training sample number, category number, mini-batch size, and the proxy number of each category, respectively. $\eta$ is specifically defined in {\name}, which indicates the ratio of irrelevant sample pairs with multiple labels to all $M\times M$ pairs in the dataset. We omit $k \equiv 1$ in single-proxy methods~\cite{Proxy-NCA,Proxy_anchor_loss} and ours for simplicity. $k$ is nontrivial for managing multiple proxies per class such as SoftTriple Loss~\cite{soft}.

Tab.~\ref{Tab:time_comp} comprehensively compares the training complexity of {\name} (ours) to state-of-the-art pair-based and proxy-based methods. The complexity of {\name} is $\mathcal{O}(MC + \eta M^2)$ since it compares each sample with positive or negative proxies and its irrelevant samples (if exists) in a mini-batch.
More specifically, in Eq.~\ref{Eq.allloss}, the complexity of the first summation requires $M_+$ ($M_-$) times calculation for positive (negative) proxy-to-data pairs, respectively. Hence the total training complexity is $\mathcal{O}(M_+C + M_-C)=\mathcal{O}(MC)$. Then the second term requires $\eta M^2$ times calculation for each irrelevant pair. The first term of {\name} is linear correlated to $M$ and $C$, while the second term is significantly degraded because $\eta$ is much less than $1$ in general.

\section{Experiment}

\begin{table*}[t!]
\centering
\caption{mAP performance by Hamming Ranking for different hash bits ($\bm{K\in\{12,24,36,48\}}$) in Flickr-25k (mAP@$\bm{1,000}$) and NUS-WIDE (mAP@$\bm{5,000}$) with AlexNet. $^\ast$: reported results with the same experiment settings from~\cite{IDHN}. $^\dagger$: our reproduced results through the publicly available models. Bold font (underlined) values indicate the best (second best).}
\resizebox{1\linewidth}{!}{%
\scriptsize
\begin{tabular}{c|c|ccccc|ccccc}
\toprule
\multirow{2}{*}{Method} & Dataset    & \multicolumn{5}{c|}{Flickr-25k}                                                                         & \multicolumn{5}{c}{NUS-WIDE}                                                                   \\ \cmidrule{2-12} 
                        & Pub.       & 12             & 24             & 36             & \multicolumn{1}{c|}{48}             & avg. $\Delta$          & 12             & 24             & 36             & \multicolumn{1}{c|}{48}             & avg. $\Delta$ \\ \midrule
DLBHC~\cite{DLBHC}$^\ast$                   & CVPR"15   & 0.724          & 0.757          & 0.757          & \multicolumn{1}{c|}{0.776}          & -              & 0.570          & 0.616          & 0.621          & \multicolumn{1}{c|}{0.635}          & -     \\
DQN~\cite{DQN}$^\ast$                     & AAAI"16   & 0.809          & 0.823          & 0.830          & \multicolumn{1}{c|}{0.827}          & 0.069          & 0.711          & 0.733          & 0.745          & \multicolumn{1}{c|}{0.749}          & 0.124 \\
DHN~\cite{DHN}$^\dagger$                     & AAAI"16   & 0.817          & 0.831          & 0.829          & \multicolumn{1}{c|}{0.851}          & 0.079          & 0.720          & 0.742          & 0.741          & \multicolumn{1}{c|}{0.749}          & 0.128 \\
HashNet~\cite{HashNet}$^\ast$                 & ICCV"17   & 0.791          & 0.826          & 0.841          & \multicolumn{1}{c|}{0.848}          & 0.073          & 0.643          & 0.694          & 0.737          & \multicolumn{1}{c|}{0.750}          & 0.095 \\
DMSSPH~\cite{DMSSPH}$^\ast$                  & ICMR"17   & 0.780          & 0.808          & 0.810          & \multicolumn{1}{c|}{0.816}          & 0.050          & 0.671          & 0.699          & 0.717          & \multicolumn{1}{c|}{0.727}          & 0.093 \\
IDHN~\cite{IDHN}$^\dagger$                    & TMM"20    & 0.827         & 0.823          & 0.822          & \multicolumn{1}{c|}{0.828}          & 0.071          & 0.772          & 0.790          & 0.795          & \multicolumn{1}{c|}{0.803}          & 0.180 \\
Proxy Anchor~\cite{Proxy_anchor_loss}$^\dagger$            & CVPR"20   & 0.796          & 0.831          & 0.834          & \multicolumn{1}{c|}{0.853}          & 0.075          & 0.767          & \uline{0.802}          & 0.809          & \multicolumn{1}{c|}{0.815}          & 0.188 \\
CSQ~\cite{CSQ}$^\dagger$                     & CVPR"20   & 0.795          & 0.819          & 0.849         & \multicolumn{1}{c|}{0.857}          & 0.077          & 0.692          & 0.754          & 0.757          & \multicolumn{1}{c|}{0.769}          & 0.132 \\
OrthoHash~\cite{OrthoHash}$^\dagger$               & NeurIPS"21 & 0.837          & 0.869          & 0.877          & \multicolumn{1}{c|}{\uline{0.891}}          & 0.115          & 0.770          & \uline{0.802}          & \uline{0.810}          & \multicolumn{1}{c|}{\uline{0.825}}          & \uline{0.191} \\
DCILH~\cite{DCILH}$^\dagger$               & TMM"21 & \textbf{0.852}          & \uline{0.879}          & \uline{0.884}          & \multicolumn{1}{c|}{0.888}          & \uline{0.122}          & \uline{0.775}          & 0.793          & 0.797          & \multicolumn{1}{c|}{0.804}          & 0.182 \\\midrule
\textbf{{\name} (Ours)}                    & - & \uline{0.845} & \textbf{0.881} & \textbf{0.893} & \multicolumn{1}{c|}{\textbf{0.901}} & \textbf{0.127} & \textbf{0.794} & \textbf{0.822} & \textbf{0.831} & \multicolumn{1}{c|}{\textbf{0.843}} & \textbf{0.212} \\
\bottomrule
\end{tabular}%
}
\label{Tab.alexnet}
\vspace{-15pt}
\end{table*}

In this section, we describe the datasets used for evaluation, the test protocols, and the implementation details. To evaluate our method, we fairly conduct experiments against existing state-of-the-art~\cite{IDHN, CSQ, OrthoHash, OLAH, Proxy_anchor_loss} and previous methods~\cite{DLBHC, DQN, DHN, HashNet, DMSSPH, NINH, DSH, IAH} on four standard multi-label benchmarks, and justify the superiority of the proposed method both quantitatively and qualitatively. Finally, we explore and conduct in-depth analyses of how each component of the proposed framework contributes to the performance.

\subsection{Implementation Details}

We implement the proposed method in the PyTorch framework~\cite{pytorch} and train on a single NVIDIA RTX 3090 GPU.
We comprehensively adopt AlexNet~\cite{alexnet} and GoogLeNet~\cite{googlenet} pretrained on ImageNet~\cite{ImageNet} as the backbones to justify the robustness of the proposed method. We fine-tune the pretrained backbones for all layers up to the FC layer and map the output layer into $K$-dimensional hash bits. 
We adopt stochastic gradient descent (SGD)~\cite{SGD} to optimize the network with momentum $0.9$ and weight decay $5e-4$. The initial learning rates for optimizing network $\mathcal{F}_{\Theta}$/proxies $\mathscr{P}$ are $0.01$/$0.001$ in AlexNet and $0.02$/$0.02$ in GoogLeNet, respectively. The learning rate decreases by $0.5$ every $10$ epochs with $100$ epochs in total.
\subsection{Dataset \& Evaluation Metrics}

Four standard benchmarks Flickr-25k~\cite{flickr}, VOC-2007~\cite{voc}, VOC-2012~\cite{voc}, and NUS-WIDE~\cite{nuswide} are adopted for evaluation. The statistics of the four datasets are summarized in Tab.~\ref{Tab:dataset}, and the detailed descriptions are as follows.

\myparagraph{Flickr-25k.}
The Flickr-25k dataset contains $25,000$ images. We follow~\cite{IDHN, NINH} to remove the noisy images that do not contain any labels. The remaining $24,851$ images contained $38$ categories in total. Among them, $4,000$ samples are randomly selected as the training set, $1,000$ samples as the query set and the rest images to construct the database.

\myparagraph{VOC-2007 \& VOC-2012.}
VOC-2007 (VOC-2012) contains $9,963$ ($11,540$) images in total, each image attaches to a label containing several of the $20$ categories. We follow~\cite{OLAH} to construct the training set and database of the two datasets for the experiment separately, with $5,011$ ($5,717$) samples in total. The officially provided query set with $4,952$ ($5,823$) samples is used for evaluation.

\myparagraph{NUS-WIDE.}
The NUS-WIDE dataset contains $269,648$ images, and each image is assigned to several $81$ categories. We follow~\cite{IDHN, NINH} to select the most frequent $21$ categories and $195,834$ images containing these attributes. We randomly selected $10,500$ and $2,100$ samples as the training and query set, respectively, and the rest samples are constructed as the database.

\begin{table*}[t!]
\centering
\caption{mAP performance by Hamming Ranking for different hash bits ($\bm{K\in\{16,32,48,64\}}$) in VOC-2007 (mAP@$\bm{5,011}$) and VOC-2012 (mAP@$\bm{5,717}$) with GoogLeNet. $^\ast$: reported results with the same experiment settings from~\cite{OLAH}. $^\dagger$: our reproduced results through the publicly available models. Bold font (underlined) values indicate the best (second best).}
\resizebox{1\linewidth}{!}{%
\scriptsize
\begin{tabular}{c|c|ccccc|ccccc}
\toprule
\multirow{2}{*}{Method} & Dataset  & \multicolumn{5}{c|}{VOC-2007}         & \multicolumn{5}{c}{VOC-2012}          \\ \cmidrule{2-12} 
                        & Pub.     & 16    & 32    & 48    & \multicolumn{1}{c|}{64}    & avg. $\Delta$ & 16    & 32    & 48    & \multicolumn{1}{c|}{64}    & avg. $\Delta$ \\ \midrule
DHN~\cite{DHN}$^\dagger$                     & AAAI''16 & 0.735 & 0.743 & 0.737 & \multicolumn{1}{c|}{0.728} & - & 0.722 & 0.721 & 0.718 & \multicolumn{1}{c|}{0.701} & - \\
NINH~\cite{NINH}$^\ast$                    & CVPR"15 & 0.746 & 0.816 & 0.840 & \multicolumn{1}{c|}{0.851} & 0.077 & 0.731 & 0.788 & 0.809 & \multicolumn{1}{c|}{0.822} & 0.072 \\
DSH~\cite{DSH}$^\ast$                     & CVPR"16 & 0.763 & 0.767 & 0.769 & \multicolumn{1}{c|}{0.775} & 0.033 & 0.753 & 0.766 & 0.776 & \multicolumn{1}{c|}{0.782} & 0.054 \\
IAH~\cite{IAH}$^\ast$                     & TIP"16  & 0.800 & 0.862 & 0.878 & \multicolumn{1}{c|}{0.883} & 0.120 & 0.794 & 0.844 & 0.862 & \multicolumn{1}{c|}{0.864} & 0.126 \\
OLAH~\cite{OLAH}$^\ast$                    & TIP"18  & \uline{0.849} & \uline{0.899} & \uline{0.906} & \multicolumn{1}{c|}{\uline{0.914}} & \uline{0.156} & \uline{0.830} & \uline{0.887} & \uline{0.904} & \multicolumn{1}{c|}{\uline{0.908}} & \uline{0.167} \\
IDHN~\cite{IDHN}$^\dagger$                    & TMM"20  & 0.772 & 0.801 & 0.796 & \multicolumn{1}{c|}{0.772} & 0.050 & 0.785 & 0.805 & 0.797 & \multicolumn{1}{c|}{0.785} & 0.078 \\
Proxy Anchor~\cite{Proxy_anchor_loss}$^\dagger$            & CVPR"20 & 0.752 & 0.802 & 0.836 & \multicolumn{1}{c|}{0.841} & 0.072 & 0.722 & 0.795 & 0.804 & \multicolumn{1}{c|}{0.823} & 0.071 \\
OrthoHash~\cite{OrthoHash}$^\dagger$               & NeurIPS"21 & 0.831          & 0.876          & 0.902          & \multicolumn{1}{c|}{0.909}        & \multicolumn{1}{c|}{0.144}          & 0.823          & 0.885          & 0.893          & \multicolumn{1}{c|}{0.900}          & 0.160 \\

\midrule
\textbf{{\name} (Ours)}                    & -        & \textbf{0.862} & \textbf{0.917} & \textbf{0.932} & \multicolumn{1}{c|}{\textbf{0.937}} & \textbf{0.176} & \textbf{0.841} & \textbf{0.903} & \textbf{0.917} & \multicolumn{1}{c|}{\textbf{0.925}} & \textbf{0.181}  \\ 
\bottomrule
\end{tabular}%
}
\label{Tab.googlenet}
\vspace{-15pt}
\end{table*}

\myparagraph{Evaluation Protocol.}
We follow~\cite{selfdistill, DSRH, NINH, HHF} to employ four metrics for quantitative evaluation: 1). mean average precision (mAP@$N$), 2). precision {\wrt} Top-$N$ returned images (Top-$N$ curves), 3). the average $l_2$ distance of each sample to the corresponding cluster centers ($d_{intra}$), and 4). the average $l_2$ distance of each cluster to the closest irrelevant cluster centers ($d_{inter}$). Regarding mAP@$N$ score computation, we select the Top-$N$ images from the retrieval ranked-list results. The returned images and the query image are considered similar iff they share at least one same label.

\subsection{Quantitative Comparison}
\myparagraph{Baselines \& Settings.}
We compare the proposed method with 1). standard baselines, including HashNet~\cite{HashNet}, DMSSPH~\cite{DMSSPH}, DQN~\cite{DQN}, DLBHC~\cite{DLBHC}, DHN~\cite{DHN}, NINH~\cite{NINH}, DSH~\cite{DSH}, and IAH~\cite{IAH}, 2). state-of-the-art deep hashing methods, including IDHN~\cite{IDHN}, Proxy Anchor~\cite{Proxy_anchor_loss}, CSQ~\cite{CSQ}, OrthoHash~\cite{OrthoHash}, OLAH~\cite{OLAH}, and DCILH~\cite{DCILH}. Note that OLAH and DCILH are two state-of-the-art deep hashing methods specifically designed for multi-label image retrieval. Besides, Proxy Anchor is the state-of-the-art proxy-based method. We verify the robustness of such proxy-based methods in multi-label scenarios to elaborate on how the proposed method improves the metric space and retrieval performance effectively.

Specifically, to justify the effectiveness of the proposed method, we compare methods using AlexNet in Flickr-25k and NUS-WIDE among hash bits $K \in \{12,24,36,48\}$ in Tab.~\ref{Tab.alexnet}, and further compare on another backbone ({\ie}, GoogLeNet) in VOC-2007 and VOC-2012 among $K \in \{16,32,48,64\}$ in Tab.~\ref{Tab.googlenet}, respectively.

\begin{figure}[t!]
    \centering
    \includegraphics[width=1\linewidth]{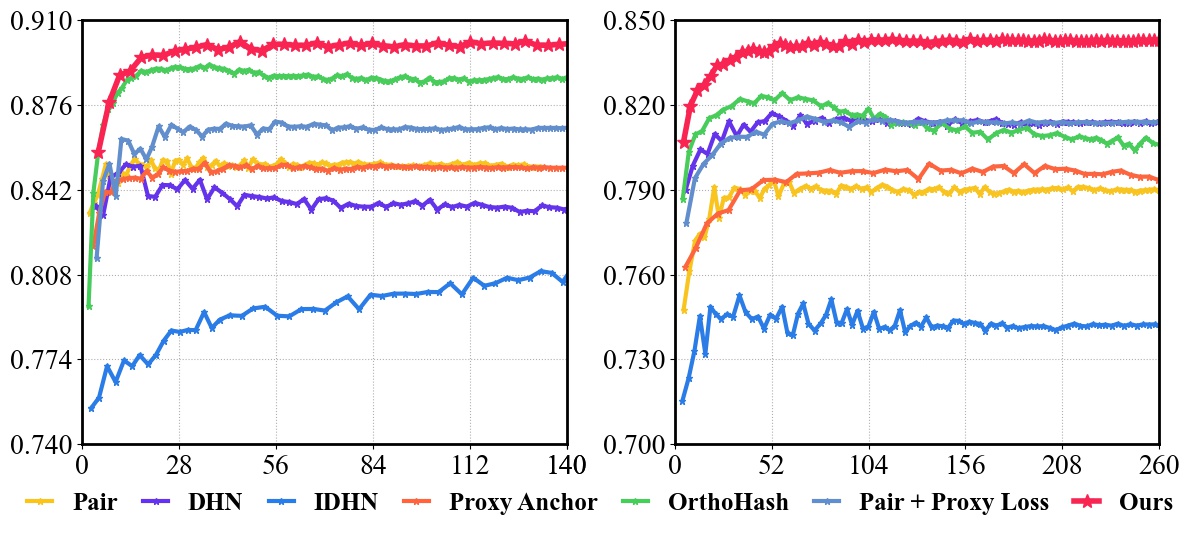}
    \caption{Convergence comparisons in Flickr-25k (left) $\bm{\&}$ NUS-WIDE (right) (48 hash bits with AlexNet). x-axis: training time (sec.), y-axis: mAP@$\Scale[1]{\bm{1,000}}$ (left) $\bm{\&}$ $\Scale[1]{\bm{5,000}}$ (right) performance on the query dataset. {HyP$\Scale[0.9]{\bm{^2}}$ Loss} achieves a faster convergence speed with a more stable training process.}
    \label{fig.converge}
\end{figure}

\myparagraph{Results \& Analysis.}
As Tab.~\ref{Tab.alexnet} and Tab.~\ref{Tab.googlenet} illustrate, {\name} outperforms existing methods over different hash bits in the four benchmarks, which justifies the robustness and effectiveness of the proposed method.
Note that when the hash bits are small ({\eg}, $12$-bit in Flickr-25k/NUS-WIDE and $16$-bit in VOC-2007/VOC-2012), the proposed method achieves $10.20\%$ performance gains on average compared to Proxy Anchor~\cite{Proxy_anchor_loss}, the state-of-the-art proxy-based method in image retrieval.

We justify that the metric space established by proxy loss is insufficient to express the profound label correlations, which achieves unsatisfactory mAP and misclassified retrieval performance. As a comparison, {\name} effectively improves the metric space by additional constraints to explicitly improve the isotropic hypersphere space, and thus improves the retrieval accuracy remarkably.

\subsection{Qualitative Comparison}

\myparagraph{Convergence Comparison.}
To demonstrate that the convergence speed of the proposed method outperforms existing methods, we compare~\cite{DHN, IDHN, Proxy_anchor_loss, OrthoHash} to {\name} in Flickr-25k and NUS-WIDE datasets. The visualized results are presented in Fig.~\ref{fig.converge}.

Fig.~\ref{fig.converge} shows that the proposed method achieves a more stable and faster convergence speed with higher performance compared to the previous state-of-the-art.
Note that pair-based methods~\cite{DHN, IDHN} exhibit training disturbance because the loss function is restricted in a mini-batch and thus lacks generalization, while OrthoHash~\cite{OrthoHash} confronts overfitting risks. As a comparison, Proxy Anchor~\cite{Proxy_anchor_loss} and ours show better stability during the whole training process.

\begin{figure}[t!]
    \centering
    \vspace{-8pt}
    \subfloat[DHN~\cite{DHN}]{
        \includegraphics[width=0.31\linewidth]{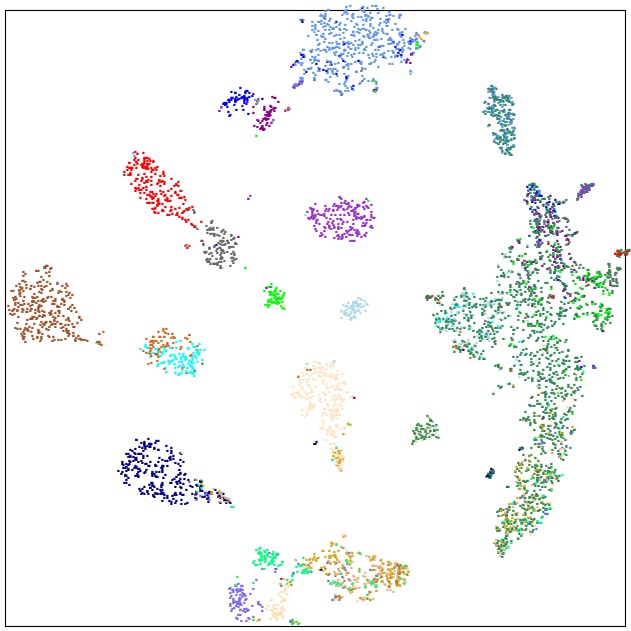}
    }
    \subfloat[IDHN~\cite{IDHN}]{
        \includegraphics[width=0.31\linewidth]{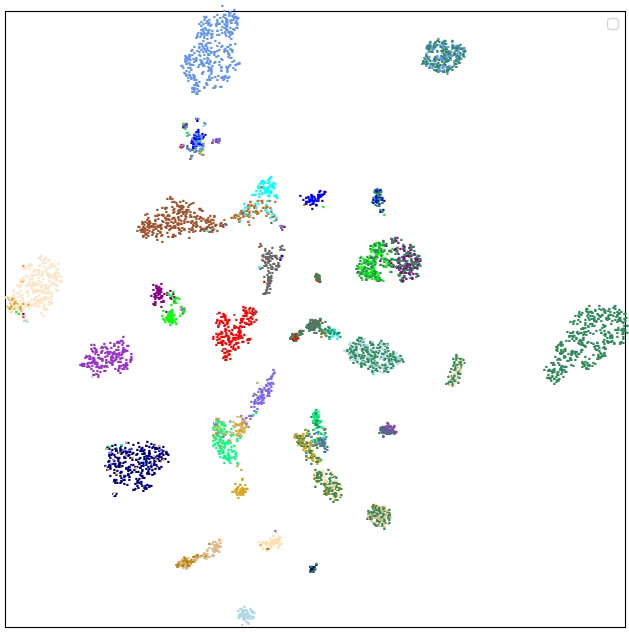}
    }
    \subfloat[Proxy Anchor~\cite{Proxy_anchor_loss}]{
        \includegraphics[width=0.31\linewidth]{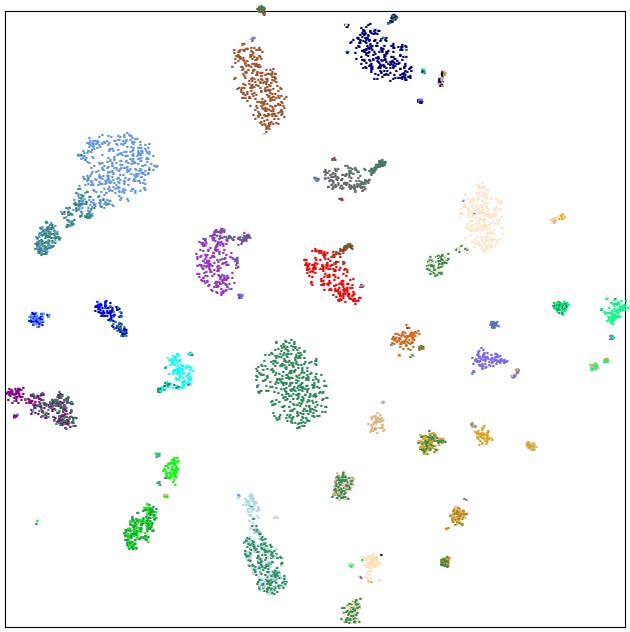}
    }
    \vspace{-12pt}
    \hfill
    \subfloat[Pair Loss]{
        \includegraphics[width=0.31\linewidth]{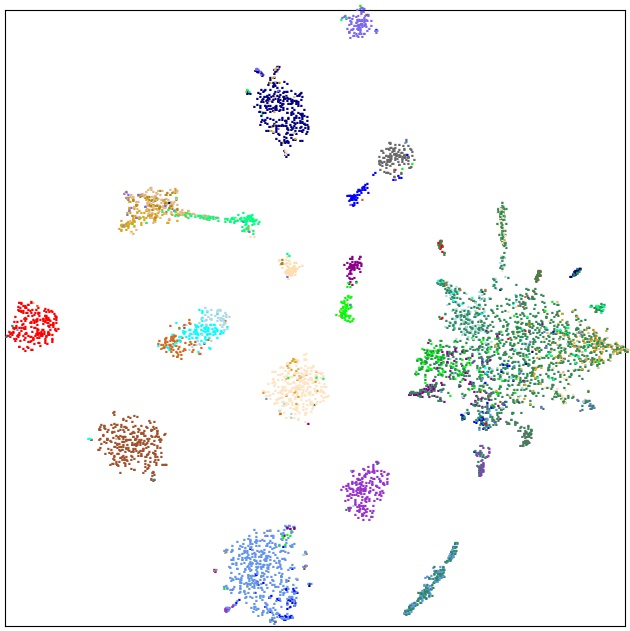}
    }
    \subfloat[Proxy Loss]{
        \begin{overpic}[trim=0cm 0cm 0cm 0cm,clip,width=0.31\linewidth,grid=false]{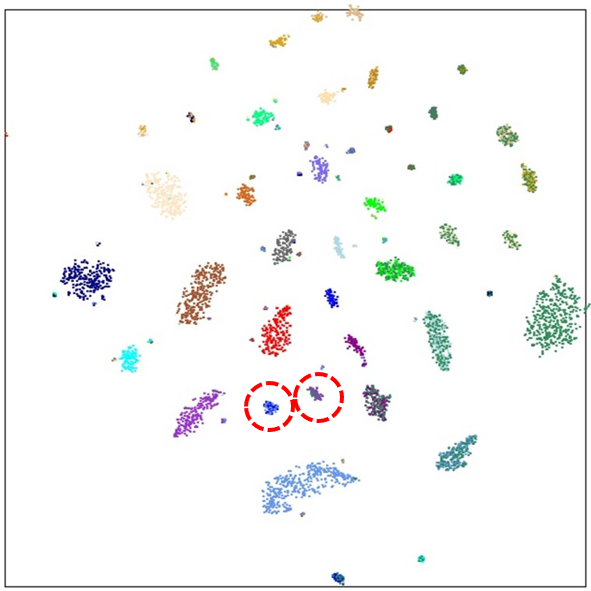}
        \put(35,22){\scriptsize bus}
        \put(30,17){\scriptsize \& car}
        \put(50,38){\scriptsize person}
        \put(68,33){\scriptsize \& train}
        \end{overpic}
        \label{fig.tsnePA}
    }
    \subfloat[{\name} (Ours)]{
        \begin{overpic}[trim=0cm 0cm 0cm 0cm,clip,width=0.31\linewidth,grid=flase]{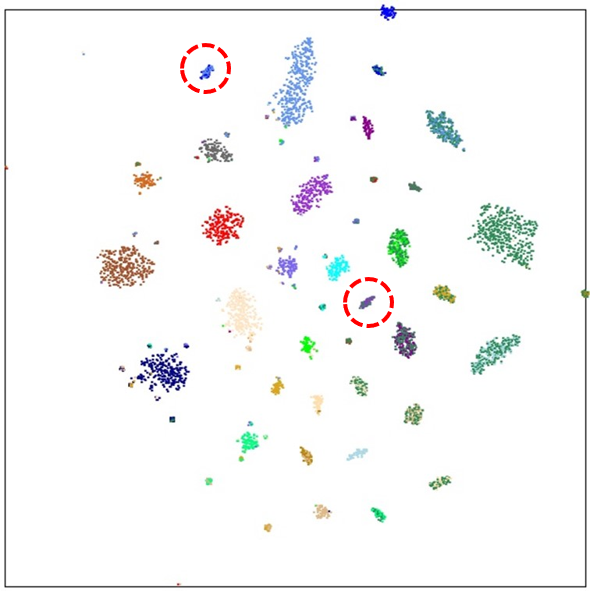}
        \put(17,87){\scriptsize bus}
        \put(16,82){\scriptsize \& car}
        \put(60,55){\scriptsize person}
        \put(66,50){\scriptsize \& train}        
        \end{overpic}
        \label{fig.tsneours}
    }%
    \caption{Visualized t-SNE comparisons in VOC-2007 dataset with $\bm{48}$ hash bits. The scatters of the same color indicate the same categories. The red circles indicate irrelevant samples.}
    \label{fig.tsne}
\end{figure}

\myparagraph{t-SNE Plots.}
To observe how the proposed method contributes better metric space, we use t-SNE~\cite{tsne} to map $K$-dimensional feature vectors into 2D plots. For each sample, we assign different colors around its neighbourhood to present its attributes. Then, the visualized comparisons among DHN~\cite{DHN}, IDHN~\cite{IDHN}, Proxy Anchor~\cite{Proxy_anchor_loss} and pair loss, proxy loss baselines to the proposed {\name} in VOC-2007 dataset are illustrated in Fig.~\ref{fig.tsne}.

Fig.~\ref{fig.tsne} shows that our method achieves visually better data distribution, especially in the confusion samples. The red circles in Fig.~\ref{fig.tsne}(e) and Fig.~\ref{fig.tsne}(f) show how {\name} solves the conflicts in Proxy Loss.
Specifically, the proxy loss improperly embeds irrelevant samples with multiple labels (bus, car) and (person, train) into nearby positions, which damages the retrieval accuracy. As a comparison, {\name} could solve the irrelevant conflicts and thereby achieves visually better metric space with superior performance.

\myparagraph{Top-$\bm{N}$ curve.}
To further demonstrate that {\name} genuinely provides quality search outcomes, we present the precision for the Top-$N$ retrieved images in Fig.~\ref{fig.prcurve}. We can observe that the proposed {\name} consistently establishes the state-of-the-art retrieval performance with higher scores over different hash bits.

\begin{figure*}[t!]
    \includegraphics[width=0.94\textwidth]{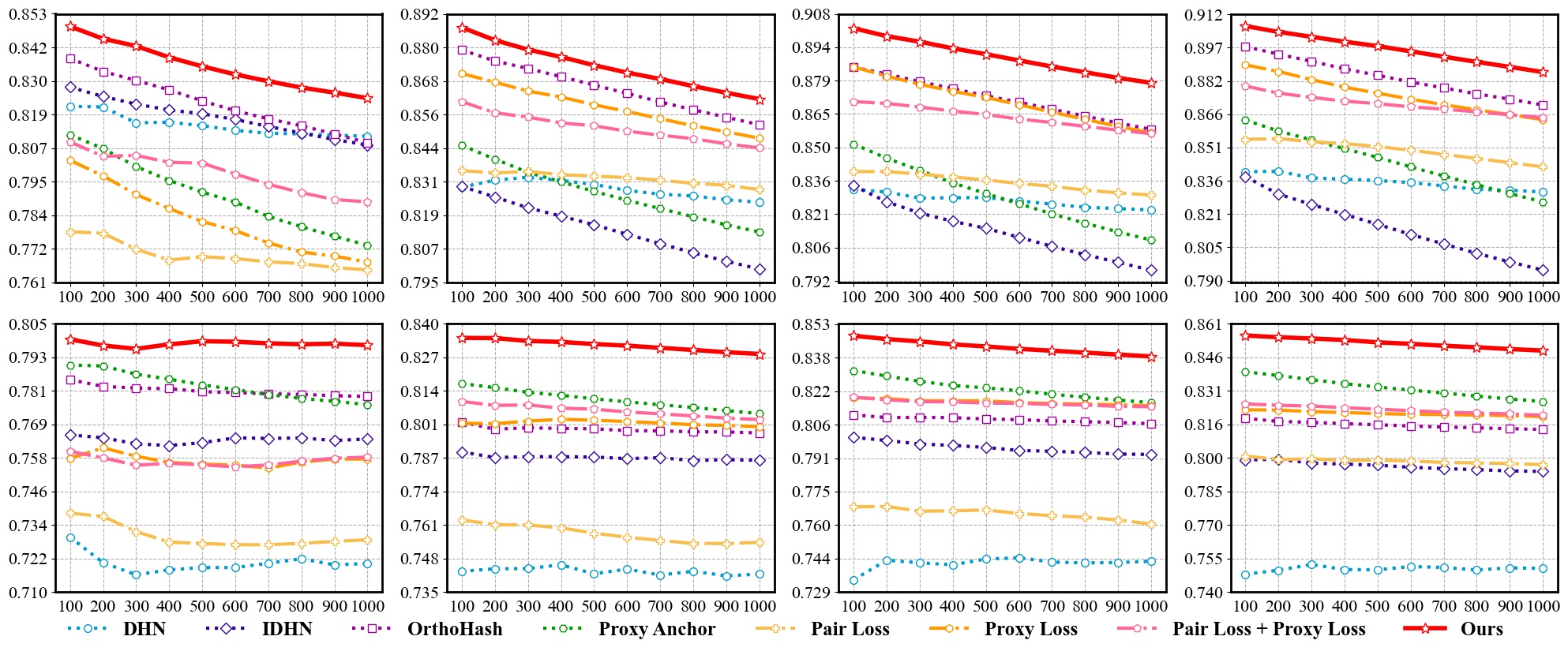}
    \vspace{-3pt}
    \caption{Performance of different methods in Flickr-25k (top) and NUS-WIDE (bottom) datasets. From left to right: Top-$\bm{N}$ curves (x-axis (Top-$\bm{N}$): $\bm{100\rightarrow 1,000}$, y-axis (Precision): $\bm{0\rightarrow 1}$) {\wrt} $\bm{12,24,36,48}$ hash bits, respectively. Our {\name} outperforms previous methods in different datasets among various hash bit lengths consistently.}
    \label{fig.prcurve}
    \vspace{-12pt}
\end{figure*}

\begin{table}[t!]
\setlength{\tabcolsep}{0.6pt}
\centering
\caption{Ablation study $\bm{\&}$ hyperparameter analysis of {HyP$\Scale[0.9]{\bm{^2}}$ Loss}. We report mAP@$\bm{1,000 (5,011)}$ in Flicker-25k (VOC-2007) to show how the proposed method improves the performance. $\bm{\beta=1 (0.5)}$ is empirically best for the two datasets.}
\resizebox{1\linewidth}{!}{%
\scriptsize
\begin{tabular}{l|cccc|cccc}
\toprule
Dataset     & \multicolumn{4}{c|}{Flickr-25k}                                   & \multicolumn{4}{c}{VOC-2007}                                      \\ \midrule
Hash Bits   & 12             & 24             & 36             & 48             & 16             & 32             & 48             & 64             \\ \midrule
Pair Loss   & 0.779          & 0.832          & 0.837          & 0.847          & 0.732          & 0.766          & 0.777          & 0.781          \\
Proxy Loss  & 0.787          & 0.834          & 0.856          & 0.871          & 0.767          & 0.820          & 0.873         & 0.887          \\ 
Proxy + Pair Loss  & 0.817          & 0.851          & 0.857          & 0.870          & 0.818          & 0.843          &  0.875        &  0.881         \\ \midrule
{\name} $(\beta=0.50)$         & 0.838          & 0.876          & \textbf{0.893}          & 0.896          & \textbf{0.862} & \textbf{0.917} & \textbf{0.932} & \textbf{0.937} \\
{\name} $(\beta=0.75)$        & 0.842          & 0.877          & 0.891          & 0.896          & 0.859          & 0.915          & 0.930          & 0.935          \\
{\name} $(\beta=1.00)$           & \textbf{0.845} & \textbf{0.881} & \textbf{0.893} & \textbf{0.901} & 0.857          & 0.912          & 0.926          & 0.934          \\
{\name} $(\beta=1.25)$        & 0.841          & 0.877          & 0.891          & 0.897          & 0.843          & 0.909          & 0.927          & 0.935          \\
%
\bottomrule
\end{tabular}
}
\label{Tab.ablation}
\end{table}
\setlength{\textfloatsep}{0pt}

\subsection{Ablation Study}

To justify how each component of {\name} contributes to a more powerful metric space, we conduct in-depth ablation studies on investigating the effectiveness of \textit{Multi-label Proxy Loss}, \textit{Irrelevant Pair Loss}, and the combination of them with different $\beta$, the results of mAP, $d_{intra}$ and $d_{inter}$ are shown in Tab.~\ref{Tab.ablation} and Tab.~\ref{Tab.ablation2}.

As Tab.~\ref{Tab.ablation} and Tab.~\ref{Tab.ablation2} illustrate, either pair loss or proxy loss fails to achieve satisfactory performance due to its inherent limitations as we analyzed before, and a simple combination of the two terms with $\beta$ is invalid but confronts overwhelmed training overhead. Specifically, since smaller $d_{intra}$ indicates better cluster performance, while larger $d_{inter}$ indicates better disentangle ability on confusion samples. The pair loss constructs a sparse metric space that fails to cluster samples tightly, while the proxy loss fails to distinguish the confusing samples that introduces misclassified results.
As a comparison, the proposed {\name} not only achieves remarkable performance gains, but also ensures better $d_{intra}$ and $d_{inter}$ that establishes superior metric space compared to others, which demonstrates its robustness and retrieval accuracy.
\begin{table}[t!]
\setlength{\tabcolsep}{1.2pt}
\centering
\caption{Ablation study of {\name}. We report $\bm{d_{intra}}$ and $\bm{d_{inter}}$ in Flickr-25k to show {\name} establishes a better metric space that ensures the two metrics simultaneously.}
\resizebox{1\linewidth}{!}{%
\scriptsize
\begin{tabular}{l|cccc|cccc}
\toprule
Metric     & \multicolumn{4}{c|}{$d_{intra}$ $\downarrow$}       & \multicolumn{4}{c}{$d_{inter}$ $\uparrow$}           \\ \midrule
Hash Bits   & 12             & 24             & 36             & 48     & 12             & 24             & 36             & 48      \\ \midrule
Pair Loss  & 3.043 & 3.453 & 4.320 & 4.749 & 2.264 & 2.868 & 3.480 & 4.152 \\
Proxy Loss & 2.094 & 2.695 & 3.174 & 3.763 & 2.564  & 3.322  & 4.144 & 4.890 \\
Proxy + Pair Loss & 2.387    & 2.792 & 3.351  & 3.734  & 2.402 & 2.995  &  	3.524	 & 3.837 \\ \midrule
{\name}       & \textbf{1.859} & \textbf{2.395} & \textbf{2.877} & \textbf{3.237} & \textbf{3.633}  & \textbf{4.331} & \textbf{4.560} & \textbf{5.202} \\ 
\bottomrule
\end{tabular}
\label{Tab.ablation2}
}
\end{table}
\setlength{\textfloatsep}{0pt}

Finally, since the hyperparameter $\beta$ controls the contribution of each component in {\name}, we investigate the influence of $\beta$ with different values and find that $\beta$ should be adjusted in specific datasets but is stable in different hash bits, while {\name} keeps relatively high performance in a wide range $\beta \in [0.50,1.25]$. In Tab.~\ref{Tab.ablation}, our empirical study shows that {\name} performs best in Flickr-25k (VOC-2007) when $\beta=1.00$ ($0.50$), respectively. 

\section{Conclusion}

In this paper, we focus on theoretically analyzing the primary reasons that proxy-based methods are disqualified for multi-label retrieval, and propose the novel {\name} to preserve the efficient training complexity of proxy loss with the irrelevant constraint term, which compensates for the limitation of the hypersphere metric space.
We conduct extensive experiments to justify the superiority of the proposed method in four standard benchmarks with different backbones and hash bits. Both quantitative and qualitative results demonstrate that the proposed {\name} enables fast, reliable and robust convergence speed, and constructs a powerful metric space to improve the retrieval performance significantly.

\noindent\textbf{Acknowledgment.}
This work was supported by SZSTC Grant No. JCYJ20190809172201639 and WDZC20200820200655001, Shenzhen Key Laboratory ZDSYS20210623092001004.

\clearpage

{
\bibliographystyle{ACM-Reference-Format}
\balance
\bibliography{main}
}
\clearpage

\appendix
\section{Appendix}
\subsection{Missing Proof}

\begin{theorem} 
\label{theo:circle}
For the $K$-dimensional metric space $\mathds{R}^K$ with $C$ hypersphere $\mathds{S} \in \mathds{R}^{K}$. The upper bound of distinguishable hypersphere number $\Omega(K,C)$ cannot enumerate the ideal $ \Omega^*(K,C)=\sum\nolimits_{c = 0}^C \tbinom{C}{c} =2^C$ when $C> K+1$. The upper bound is limited at:
\begin{equation}
\tilde{\Omega}(K,C) = \mathop {\sup}\nolimits_{\mathds{S}} {{\Omega}(K,C)}=\tbinom{C-1}{K}+\sum\nolimits_{k=0}^K \tbinom{C}{k} <2^C
\end{equation}
\end{theorem}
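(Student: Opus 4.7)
The plan is to identify $\Omega(K,C)$ with the number of cells in the arrangement of $C$ hyperspheres in $\mathds{R}^K$: each cell realizes a unique inclusion/exclusion sign vector in $\{0,1\}^C$, and two points in the same cell are indistinguishable by category membership. Hence $\tilde\Omega(K,C)$ equals the maximum cell count $R(K,C)$ over all configurations of $C$ hyperspheres, and the goal becomes to show $R(K,C) = \binom{C-1}{K} + \sum_{k=0}^{K}\binom{C}{k}$, attained in general position.

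The key tool is the sweep recurrence
\begin{equation}
R(K,C) = R(K,C-1) + N(K-1, C-1),
\end{equation}
where $N(K-1, C-1)$ denotes the maximum number of cells created by $C-1$ $(K-2)$-spheres on a single $(K-1)$-sphere in general position. This follows by adding one hypersphere at a time: the newly inserted sphere, a copy of $\mathds{S}^{K-1}$, is partitioned by the existing $C-1$ spheres into at most $N(K-1, C-1)$ pieces, each lying inside a single existing cell and splitting it into two. In parallel, I would establish
\begin{equation}
N(K,C) = 2\sum_{k=0}^{K}\binom{C-1}{k}
\end{equation}
either by the same sweep argument applied on $\mathds{S}^K$, or by observing that an arrangement of $(K-1)$-spheres on $\mathds{S}^K$ in general position is equivalent under central projection to a central hyperplane arrangement in $\mathds{R}^{K+1}$, which is counted by the classical formula for such arrangements.

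With both recurrences in hand I would close the induction on $C$ by verifying
\begin{equation}
\binom{C-2}{K} + \sum_{k=0}^{K}\binom{C-1}{k} + 2\sum_{k=0}^{K-1}\binom{C-2}{k} = \binom{C-1}{K} + \sum_{k=0}^{K}\binom{C}{k},
\end{equation}
which reduces by Pascal's rule applied termwise to the identity $\binom{C-2}{K-1} + \sum_{k=0}^{K-1}\binom{C-1}{k} = 2\sum_{k=0}^{K-1}\binom{C-2}{k}$, itself immediate from one further application of Pascal. The strict inequality $\tilde\Omega(K,C) < 2^C$ for $C > K+1$ then follows by writing $2^C = \sum_{k=0}^{C}\binom{C}{k}$ and observing that the defect $\sum_{k=K+1}^{C}\binom{C}{k} - \binom{C-1}{K}$ is already positive from its leading term, since $\binom{C}{K+1} = \tfrac{C}{K+1}\binom{C-1}{K} > \binom{C-1}{K}$ whenever $C \geq K+2$. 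The main obstacle is a careful treatment of general position throughout the sweep: one must simultaneously rule out degeneracies that would shrink the number of pieces on the newly inserted sphere (tangencies, coincident triple intersections, missed transversal crossings) and exhibit an explicit configuration of $C$ hyperspheres that achieves the maximum at every dimension of the inductive chain.
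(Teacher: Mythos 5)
Your argument follows the same skeleton as the paper's: both rest on the insertion recurrence obtained by adding one hypersphere at a time, counting the pieces into which the existing spheres cut it, each piece bisecting one old cell, and then closing an induction on $C$ with Pascal-type identities. The differences are in packaging and in the endgame. You introduce the on-sphere cell count $N(K,C)=2\sum_{k=0}^{K}\binom{C-1}{k}$ as a separate quantity; by Pascal's rule this equals $\binom{C-1}{K}+\sum_{k=0}^{K}\binom{C}{k}$, i.e.\ $N(K,C)=\tilde{\Omega}(K,C)$, so your recurrence $R(K,C)=R(K,C-1)+N(K-1,C-1)$ is literally the paper's $\Omega(K,C)=\Omega(K,C-1)+\Omega(K-1,C-1)$; the identification is stereographic projection, which preserves region counts between subsphere arrangements on a sphere and sphere arrangements in Euclidean space of one lower dimension. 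One caveat: your phrase ``central projection to a central hyperplane arrangement'' is accurate only for great subspheres, whereas the traces of the existing spheres on the inserted one are in general not great, so you need the affine/stereographic version of the reduction --- the maximum count happens to be the same either way, so the formula survives, but the justification as stated is slightly off. Where you genuinely improve on the paper is the strict inequality: the paper runs a second, fairly laborious induction to show $\tilde{\Omega}(K,C)<2^{C}$ for $C>K+1$, whereas your observation that $\binom{C}{K+1}=\frac{C}{K+1}\binom{C-1}{K}>\binom{C-1}{K}$ for $C\geq K+2$ gives it in one line. Both proofs share the same soft spot, which you at least name explicitly: one must argue that general position is simultaneously achievable at every stage of the sweep and that each piece of the inserted sphere lies in a single old cell and splits it into exactly two.
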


\begin{proof}

To begin, we first deduce the optimal distinguishable hyperspace number $\Omega^*(K,C)$ in the $K$-dimensional hyperspace with $C$ categories. Note that each hyperspace represents one category's cluster center. Hence, the ideal distinguishable number of any $c\in \{0,\cdots,C\}$ hyperspheres is the combination of them that is denoted as $n(K,c)$.
\begin{equation}
    n(K,c)= \tbinom{C}{c}
\end{equation}

Hence, according to the binomial theorem, for $C$ hypersphere space, the optimal distinguishable hyperspace number is the summation of each $c\in \{0,\cdots,C\}$, as Eq.~\ref{Eq.max} illustrates.
\begin{equation}
    \Omega^*(K,C)=\sum\nolimits_{c = 0}^C \tbinom{C}{c} = 2^C
    \label{Eq.max}
\end{equation}

Then, note that the ideal distinguishable hyperspace number can only get achieved when the hyperspace is unconstrained ({\ie}, it can have any intersections to each other). When it comes to $K$-dimensional isotropic hypersphere, we raise Lemma.~\ref{lemma} to further demonstrate the upper bound in the $C$ hypersphere metric space.
\begin{lemma}
    When $K\geq 2$, two $K$-dimensional hyperspheres will intersect at one $(K-1)$-dimensional hypersphere at maximum. Specifically, two 2D circles will intersect at one $1$-dimensional hypersphere at maximum. The $1$-dimensional hypersphere is a pair of two points, which are the boundary of a line segment.
    \label{lemma}
\end{lemma}

We denote the $K$-dimensional hypersphere with the number of $C$ has the maximum distinguishable regions as $\Omega(K,C)$. Then, the $K$-dimensional hypersphere with the number of $C-1$ has the maximum distinguishable regions $\Omega(K,C-1)$.
When we add the $C$-th hypersphere into the existing $K$-dimensional hyperspheres with the number of $C-1$, according to Lemma.~\ref{lemma}, the $C$-th hypersphere will intersect with each hypersphere in the hyperspace at one $(K-1)$-dimensional hypersphere at maximum, {\ie}, $(K-1)$-dimensional hyperspheres with the number of $C-1$ at maximum are added into the $K$-dimensional hyperspace. These $(K-1)$-dimensional hyperspheres thereby generate $(K-1)$-dimensional hypersurfaces with the number of $\Omega(K-1,C-1)$ at maximum correspondingly, and each of these new $(K-1)$-dimensional hypersurfaces bisects the $K$-dimensional space into two parts. So we have the following recurrence relation:
\begin{equation}
    \Omega(K,C)=\Omega(K,C-1) + \Omega(K-1,C-1)
    \label{Eq.recursion}
    \end{equation}

Note that one $K$-dimensional hypersphere can only separate hyperspace into two regions, {\ie}, the inside and outside of the hypersphere, respectively. 
Hence, we have the initial condition that for every $K\ge2$ and $C\equiv1$:
\begin{equation}
    \Omega(K, 1)\equiv 2
\label{Eq.K}
\end{equation}

Without loss of generalization, we first consider the simplest particular case when $K=2$. Then, the $K$-dimensional hypersphere is degraded into a 2D circle situation. 

Suppose there are $C-1$ circles in the 2D space, when we have an additional one circle in the space, such that each $C-1$ circle should intersect with the additional one circle to achieve the maximum distinguishable hypersphere number.

Then, according to Lemma.~\ref{lemma}, the circles will at most intersect with the additional circle with $(C-1)$ intersected 1D point pairs. Hence, the $2\times (C-1)$ points will introduce $2\times(C-1)$ lines that segment the original regions into bisects. As a result, the additional one circle will introduce additional $2\times(C-1)$ regions. We can obtain that:
\begin{equation}
\begin{aligned}
    \Omega(2, C)&=\Omega(2, C-1) + 2\times(C-1)\\
                &=\Omega(2, C-2) + 2\times(C-1) + 2\times(C-2)\\
                &=\cdots\\
                &=\Omega(2, 1) + 2\times(C-1+C-2+\cdots+1)
\end{aligned}
\end{equation}

According to Eq.~\ref{Eq.K}, we have $\Omega(2,1)=2$, {\ie}, one 2D circle can represent two distinguishable regions at maximum, then:
\begin{equation}
\begin{aligned}
    \Omega(2, C)&=2 + 2\times(C-1+C-2+\cdots+1)\\
                &=2 + C\times(C-1)\\
                &=C^2 - C + 2\\
                &=\frac{(C-1)(C-2)}{2}+\left(1+C+\frac{C(C-1)}{2}\right)\\
                &=\tbinom{C-1}{2}+\sum\nolimits_{k=0}^2 \tbinom{C}{k}
\label{Eq.2C}
\end{aligned},
\end{equation}
which satisfies the proposition when $K=2$.

Then, we generalize into the $K$-dimensional situation and will prove Theorem.~\ref{theo:circle} by mathematical induction below.

\begin{adjustbox}{minipage=0.97\linewidth,precode=\dbox}
\begin{itemize}[leftmargin=*,nosep,nolistsep]
\item a). Considering the initial situation that for any $K\geq 2, K \in N_{+}$ and $C=1$. As Eq.~\ref{Eq.K} illustrates, we have:
    \begin{equation}
    \Omega(K, C)=\Omega(K, 1)=2=\tbinom{0}{K}+\sum\nolimits_{k=0}^K \tbinom{1}{k},
    \end{equation}
which satisfies the proposition when $C=1$.
\hfill
{$\triangleleft$}%
\item b). For any $K\geq 2, K \in N_{+}$ and a specific $C\geq 1, C\in N_{+}$, we assume the upper bound of distinguishable hypersphere number $\Omega(K,C)$ satisfies:
\begin{equation}
    \Omega(K, C)=\tbinom{C-1}{K}+\sum\nolimits_{k=0}^K \tbinom{C}{k},
\end{equation}
\hfill
{$\triangleleft$}%
\item c). Then, considering the situation of $\Omega(K,C+1)$. When $K=2$, as Eq.~\ref{Eq.2C} illustrates, $\Omega(2,C+1)$ satisfies the proposition. When $K > 2$, according to Eq.~\ref{Eq.recursion}, we have:
\begin{equation}
    \begin{aligned}
    &\Omega(K, C+1) = \Omega(K,C) + \Omega(K-1,C)\\
                 &= \tbinom{C-1}{K}+\sum\nolimits_{k=0}^K \tbinom{C}{k}+ \tbinom{C-1}{K-1}+\sum\nolimits_{k=0}^{K-1}\tbinom{C}{k}\\
                 &= \tbinom{C-1}{K}+\tbinom{C-1}{K-1}+ \tbinom{C}{0} + \sum\nolimits_{k=1}^K \tbinom{C}{k}+ \sum\nolimits_{k=0}^{K-1}\tbinom{C}{k}\\
                 &=\left(\tbinom{C-1}{K}+\tbinom{C-1}{K-1}\right) + \tbinom{C}{0} + \left(\tbinom{C}{0} + \tbinom{C}{1}\right) + \cdots + \left(\tbinom{C}{K-1} + \tbinom{C}{K}\right)
    \end{aligned}
\end{equation}
\end{itemize}
\end{adjustbox}

\begin{adjustbox}{minipage=0.97\linewidth,precode=\dbox}
\begin{itemize}[leftmargin=*,nosep,nolistsep]
\item[] Note that $\tbinom{m}{n}=\tbinom{m-1}{n}+\tbinom{m-1}{n-1}$, then we have:
\begin{equation}
    \begin{aligned}
    \Omega(K, C+1) &= \tbinom{C}{K} + \tbinom{C}{0} + \tbinom{C+1}{1} + \tbinom{C+1}{2} +\cdots+ \tbinom{C+1}{K}\\
                 &= \tbinom{C}{K}+\sum\nolimits_{k=0}^K \tbinom{C+1}{k}
    \end{aligned},
\end{equation}
which satisfies the proposition when $C=C+1$.
\hfill
{$\triangleleft$}%
\end{itemize}
\end{adjustbox}

Finally, according to mathematical induction, for any $K\geq 2, C\geq 1, K,C \in N_{+}$, the upper bound of distinguishable hypersphere number in $K$-dimensional metric space is obtained that:
\begin{equation}
\begin{aligned}
    \tilde{\Omega}(K,C) = \mathop {\sup}\nolimits_{\mathds{S}} {{\Omega}(K,C)}=\tbinom{C-1}{K}+\sum\nolimits_{k=0}^K \tbinom{C}{k}
\end{aligned}
\end{equation}

When C = K + 1, we can see that the ideal distinguishable hyperspace number ${\Omega}^*(K,C)$ is equal to the upper bound $\tilde{\Omega}(K,C)$ because the hash bit length is large enough to enumerate all possible situations, as Eq.~\ref{Eq.K+1} illustrates.
\begin{equation}
    \begin{aligned}
        \tilde{\Omega}(K,C)&=\tbinom{K+1-1}{K}+\sum\nolimits_{k=0}^K \tbinom{K+1}{k}\\
                   &=\tbinom{K}{K} + \tbinom{K+1}{0} + \cdots + \tbinom{K+1}{K}\\
                   &=\tbinom{K+1}{0} + \cdots + \tbinom{K+1}{K}+\tbinom{K+1}{K+1} \\
                   &=\sum\nolimits_{k=0}^{K+1} \tbinom{K+1}{k}=2^{K+1}\\
                   &=2^C ={\Omega}^*(K,C)
    \label{Eq.K+1}
    \end{aligned}
\end{equation}

Then, we will prove that when $C\textgreater K+1$, $\tilde{\Omega}(K,C)\textless {\Omega}^*(K,C)$ by mathematical induction below.

\begin{adjustbox}{minipage=0.97\linewidth,precode=\dbox}
\begin{itemize}[leftmargin=*,nosep,nolistsep]
\item a). Considering the initial situation that for any $K\geq 2, K \in N_{+}$, when $C = K + 2$, according to Eq.~\ref{Eq.recursion}, we have:
\begin{equation}
    \begin{aligned}
        &\Omega(K,C)=\Omega(K,K+2)\\
                   &=\Omega(K,K+1) + \Omega(K-1,K+1)\\
                   &=\Omega(K,K+1) + \Omega(K-1,K) + \Omega(K-2,K)\\
                   &=\cdots \\
                   &=\Omega(K,K+1) + \Omega(K-1,K) +\cdots +\Omega(3,4) + \Omega(2, 4)
    \end{aligned}
\end{equation}
As Eq.~\ref{Eq.2C} and Eq.~\ref{Eq.K+1} illustrate, we have:
\begin{equation}
    \begin{aligned}
        \Omega(K,C)&=2^{K+1} + 2^K + \cdots + 2^4 + 4^2 -4+2\\
                   &=2^{K+2} -2\\
                   &=2^{C} -2 \\
                   &< 2^{C} = {\Omega}^*(K,C),
    \end{aligned}
\end{equation}
which satisfies the proposition when $C=K+2$.
\hfill
{$\triangleleft$}%
\item b). For any $K\geq 2, K \in N_{+}$ and a specific $I\geq 2, I\in N_{+}$, let $C=K+I$, we assume the inequation $\tilde{\Omega}(K,C) < \Omega^*(K,C)$ satisfies:
\begin{equation}
    \tilde{\Omega}(K,C)=\tilde{\Omega}(K,K+I) < \Omega^*(K,K+I) = 2^{K+I},
\end{equation}
\hfill
{$\triangleleft$}%
\item c). Then, when $C=K+I+1$, we have:
\begin{equation}
    \begin{aligned}
        &\Omega(K,C)=\Omega(K,K+I+1)\\
                   &=\Omega(K,K+I) + \Omega(K-1,K+I)\\
                   &=\Omega(K,K+I) + \Omega(K-1,K+I-1) + \Omega(K-2,K+I-1)\\
                   &=\cdots
    \end{aligned}
\end{equation}
\end{itemize}
\end{adjustbox}

\begin{adjustbox}{minipage=0.97\linewidth,precode=\dbox}
\begin{itemize}[leftmargin=*,nosep,nolistsep]
\item[]
\begin{equation}
    \begin{aligned}
    \nonumber
                   &=\Omega(K,K+I) + \Omega(K-1,K+I-1) +\cdots +\Omega(3,3+I)\\
                   &\quad  + \Omega(2, 3+I)\\
                   &<2^{K+I} + 2^{K+I-1} + \cdots + 2^{3+I} + (3+I)^2-(3+I)+2\\
                   &=2^{K+I+1} - 2^{3+I} + I^2+5I+8
    \end{aligned}
\end{equation}
Note that $2^{3+I}\textgreater I^2+5I+8$ when $I\geq2$, then we have:
\begin{equation}
    \begin{aligned}
        \Omega(K,C)&<2^{K+I+1}=\Omega^*(K,C),
    \end{aligned}
\end{equation}
which satisfies the proposition when $C=K+I+1$.
\hfill
{$\triangleleft$}%
\end{itemize}
\end{adjustbox}

Finally, according to mathematical induction, $\tilde{\Omega}(K,C)< {\Omega}^*(K,C)$ when $C\textgreater K+1$.

\end{proof}

\begin{theorem}
\label{theo:middle}
When the proxies have converged to fixed positions (i.e., the angles of proxy pairs are constant), then the best position of $2$-label samples is the middle of $2$ positive proxies. The $n$-label scenarios can be deduced in a similar fashion.
\end{theorem}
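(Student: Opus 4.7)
The plan is to isolate the contribution of the single $2$-label sample to $\mathcal{L}_{proxy}$ and show, by direct optimization over the direction of $\bm{v}$, that the minimizer lies along the sum of its two positive-proxy directions, which is exactly the angular midpoint on the unit sphere. Since every quantity in Eq.~\ref{Eq.loss} depends only on the direction of $\bm{v}$ (the similarities are normalized cosines) and the denominators are constants with respect to this single $\bm{v}$, I would without loss of generality restrict attention to unit vectors and assume each $\bm{p}_i$ is unit-normalized.

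With this normalization, the terms of Eq.~\ref{Eq.loss} that involve the target sample $\bm{v}$ with positive labels $\{1,2\}$ decompose into: (i) a positive-pair piece proportional to $-\bm{v}\cdot(\bm{p}_1+\bm{p}_2)$ coming from $\cos_{+}$, and (ii) a piecewise-linear hinge piece proportional to $\sum_{j\geq 3}(\bm{v}\cdot\bm{p}_j-\zeta)_+$ coming from $\cos_{-}$, where the sum ranges over the negative proxies. The first step is to solve the positive-part subproblem in isolation: maximize the linear functional $\bm{v}\cdot(\bm{p}_1+\bm{p}_2)$ over the unit sphere. By Cauchy--Schwarz (or a one-line Lagrange-multiplier computation), the unique maximizer is
\begin{equation*}
\bm{v}^\star = \frac{\bm{p}_1+\bm{p}_2}{|\bm{p}_1+\bm{p}_2|},
\end{equation*}
which is precisely the angular midpoint of the two positive proxies.

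Next I would dispose of the hinge terms. Under the standing hypothesis that the proxies have converged, the pairwise angles between distinct proxies exceed the separation implied by $\zeta=\zeta(C,K)$; an elementary angular-triangle argument then yields $\bm{v}^\star\cdot\bm{p}_j\leq\zeta$ for every negative proxy $\bm{p}_j$, so each hinge term vanishes at $\bm{v}^\star$ and cannot displace the optimum. Hence $\bm{v}^\star$ is a global minimizer of the full sample-wise contribution. For the general $n$-label extension, the same derivation applied to the positive proxies $\bm{p}_1,\ldots,\bm{p}_n$ shows that the minimizer is along $\bigl(\sum_{i=1}^{n}\bm{p}_i\bigr)/\bigl|\sum_{i=1}^{n}\bm{p}_i\bigr|$, the angular midpoint of the $n$ positive proxies, which justifies the ``deduced in a similar fashion'' clause.

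The main obstacle is making the converged/well-separated assumption precise enough to rigorously eliminate the hinge contribution at $\bm{v}^\star$: one must argue that with fixed, sufficiently well-separated proxies, the midpoint direction is uniformly at angular distance at least $\arccos\zeta$ from every negative proxy. A clean route is to note that convergence of $\mathcal{L}_{proxy}$ on the single-label samples of each class forces pairwise inter-proxy cosines below $\zeta$, from which a midpoint angular bound $\angle(\bm{v}^\star,\bm{p}_j)\geq \tfrac{1}{2}\min_{k\in\{1,2\}}\angle(\bm{p}_k,\bm{p}_j)$ can be derived; once this is in hand, the remainder of the argument is routine.
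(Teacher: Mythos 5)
Your proof is correct on the core claim but reaches it by a genuinely different route than the paper. The paper first argues (via projecting $\bm{v}$ onto the plane spanned by $\bm{p}_1,\bm{p}_2$) that the optimum must be coplanar with the two positive proxies, so that $\theta_3=\theta_1+\theta_2$, and then runs a single-variable calculus computation on $\mathcal{L}_+=-(\cos\theta_1+\cos\theta_2)$ to extract $\cos^2\theta_1=\cos^2\tfrac{\theta_3}{2}$ and hence $\theta_1=\theta_2=\tfrac{\theta_3}{2}$. You instead observe that for unit vectors the positive part is the linear functional $-\bm{v}\cdot(\bm{p}_1+\bm{p}_2)$, whose minimizer on the sphere is $(\bm{p}_1+\bm{p}_2)/|\bm{p}_1+\bm{p}_2|$ by Cauchy--Schwarz; this is shorter, subsumes the paper's coplanarity step, and --- unlike the paper's one-variable calculus --- extends verbatim to $n$ labels, so it actually does more to justify the ``deduced in a similar fashion'' clause than the paper's own ``similarly.'' Both proofs dispose of the negative proxies by the same appeal to separation at convergence, and here is the one place your write-up is weaker than you think: the bound $\angle(\bm{v}^\star,\bm{p}_j)\geq\tfrac{1}{2}\min_{k\in\{1,2\}}\angle(\bm{p}_k,\bm{p}_j)$ is false in general --- the bisector of $\bm{p}_1,\bm{p}_2$ can lie arbitrarily close to a negative proxy $\bm{p}_j$ (or to the bisector of two other proxies) even when all pairwise proxy angles are large, which is precisely the embedding-conflict phenomenon the paper describes in Section 3.2. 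The paper's proof is no more rigorous on this point (it simply declares the negative-proxy gradient negligible near convergence), so your argument matches the paper's standard, but you should state the neglect of the hinge terms as an assumption rather than claim that angular bound as a provable lemma.
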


\begin{proof}
Suppose the feature vector $\bm{v_x}$ contains $2$ labels $y_1$, $y_2$, with corresponding proxies $\bm{p}_1$, $\bm{p}_2$. 
Let $\theta_1 = \langle\bm{v_x}, \bm{p}_1\rangle$, $\theta_2 = \langle\bm{v_x}, \bm{p}_2\rangle$, $\theta_3 = \langle\bm{p}_1, \bm{p}_3\rangle$,  where $\theta_1, \theta_2, \theta_3 \in [0, \pi]$. 

Note that the effect of negative proxies is negligible when about convergence, because they are away from $\bm{v_x}$, so we only consider the gradient from $\bm{p}_1$, $\bm{p}_2$.

Then we have $\mathcal{L}_+ = -(\cos\theta_1 + \cos\theta_2)$, and the objective function is $\arg \max(\cos\theta_1 + \cos\theta_2)$ accordingly. 

If $\bm{v_x}$ is non-coplanar with $\bm{p}_1$, $\bm{p}_2$, we have the projection $\bm{v_x}'$ in the plane defined by $\bm{p}_1, \bm{p}_2$, and denote the corresponding angles with $\bm{p}_1$, $\bm{p}_2$ as $\theta_1'$, $\theta_2'$, such that $\theta_1' \textless \theta_1, \theta_2' \textless \theta_2 \Rightarrow \cos\theta_1' + \cos\theta_2' \textgreater \cos\theta_1 + \cos\theta_2$.

Hence, the optimal objective function is satisfied when $\bm{v_x}, \bm{p}_1, \bm{p}_2$ are coplanar such that $\theta_1' = \theta_1, \theta_2' = \theta_2$, and ensures $\theta_3 = \theta_1 + \theta_2$. Then we have:
\begin{equation}
    \arg\min \mathcal{L}_+ = \arg\max((1 + \cos\theta_3)\cos\theta_1+\sin\theta_3\sqrt{1 - \cos^2\theta_1})
\end{equation}
To obtain the extreme point of $\mathcal{L}_+$, let:
\begin{equation}
    \frac{\partial\mathcal{L}_+}{\partial \cos\theta_1} = 1 + \cos\theta_3 - \sin\theta_3\frac{\cos\theta_1}{\sqrt{1-\cos^2\theta_1}} = 0
\end{equation}
Note that $1 + \cos\theta_3\geq 0$, $\sin\theta_3\geq 0$, $\sqrt{1 - \cos^2\theta_1} \geq 0$, we can get $\cos\theta_1 \geq 0$. Thus $\theta_1 \in [0, \frac{\pi}{2}]$. Then, it is easy to obtain that: 
\begin{equation}
\begin{aligned}
     &\frac{1-\cos^2\theta_3}{2+2\cos\theta_3} = 1 - \cos^2\theta_1 \\
     &\Rightarrow \cos^2\theta_1 = 1 - \frac{1 - \cos^2\theta_3}{2+2\cos\theta_3}\\
     &= \cos^2\frac{\theta_3}{2}    
\end{aligned}
\end{equation}

Considering the domain of $\theta_1, \theta_2, \theta_3$, we have $\cos\theta_1 = \cos\frac{\theta_3}{2} \Rightarrow \theta_1 = \theta_2 = \frac{\theta_3}{2}$, {\ie}, $\bm{v_x}$ will be embedded into the middle of the two proxies.

Similarly, we can extend the conclusion into $n$-label scenarios where the optimal solution is satisfied when $\bm{v_x}^*$ is in the middle of $n$-proxies, as we claimed in the main paper.

\end{proof}

\end{document}